\newcommand{\argmax}[0]{\textnormal{argmax}}
\newcommand{\Rho}{P}
\newcommand{\Tau}{T}
\newtheorem*{proof*}{Proof}
\newtheorem{theorem}{Theorem}
\newtheorem{lemma}{Lemma}
\newtheorem{bsp}{Example}
\title{Safe Active Learning for Time-Series Modeling \\ with Gaussian Processes}
\author{Christoph Zimmer \quad Mona Meister \quad Duy Nguyen-Tuong \\
  Bosch Center for Artificial Intelligence, Renningen, Germany \\
  \texttt{\{christoph.zimmer,mona.meister,duy.nguyen-tuong\}@de.bosch.com}
}
\begin{document}

\maketitle

\begin{abstract}
Learning time-series models is useful for many applications, such as simulation and forecasting. In this study, we consider the problem of actively learning time-series models while taking given safety constraints into account. For time-series modeling we employ a Gaussian process with a nonlinear exogenous input structure. The proposed approach generates data appropriate for time series model learning, i.e. input and output trajectories, by dynamically exploring the input space. The approach parametrizes the input trajectory as consecutive trajectory sections, which are determined stepwise given safety requirements and past observations. We analyze the proposed algorithm and evaluate it empirically on a technical application. The results show the effectiveness of our approach in a realistic technical use case.
\end{abstract}

\section{Introduction}

Active model learning deals with the problem of sequential data labeling for learning an unknown function. Data points are sequentially selected for labeling such that the information required for approximating the unknown function is maximized, according to some measures. The overall goal is to create an accurate model without having to supply more data than necessary and, thereby reducing the annotation effort and measurement costs. Active learning has been well studied for classification tasks, e.g. for image labeling \cite{Joshi09}, but in the field of regression, the active learning approach, related to the optimal experimental design problem \cite{Fedorov12}, is not yet widespread.

For actively learning time-series models representing physical systems, the data has to be generated such that the relevant dynamics can be captured. In practice, the physical system needs to be excited by dynamically moving around in the input space using \textit{input trajectories}, such that the collected data, i.e. input and output trajectories, contain as much information about the dynamics as possible. Commonly used input trajectories include sinusoidal functions, ramps and step functions, white noise, etc. \cite{ljung1985theory,pintelon2012}. When employing input excitation on physical systems, however, additional aspects of safety need to be considered. The excitation must not damage the physical system while dynamically exploring the input space, making it crucial to identify safe regions where dynamic excitation can be performed.                 

In this paper we consider the problem of safe exploration for active learning of time-series models. The goal is to generate input trajectories and output measurements which are informative for learning time-series models. To do so, our input trajectories are parametrized in consecutive sections, e.g. as consecutive piecewise ramps or splines. These consecutive sections of the input trajectory are determined \textit{stepwise} in an explorative approach. Given observations, the next trajectory sections are determined by maximizing an information gain criterion with respect to the model. In this paper, we employ a Gaussian process with a \textit{nonlinear exogenous} structure as the time-series model for which an appropriate exploration criterion is desired. An additional Gaussian process model is simultaneously used for predicting safe input regions, given safety requirements. The sections of the input trajectory are determined by solving a constraint optimization problem, taking the safety prediction into account. The main contributions of the paper can be summarized as:
\begin{itemize}
	\item We formulate an active learning setting for learning time-series models with dynamic exploration, in the context of the Gaussian process framework. 
	\item We incorporate the safety aspect into the exploration mechanism and derive a criterion appropriate for the dynamic exploration of the input space with trajectories.
	\item We provide a theoretical analysis of the algorithm, and empirically evaluate the proposed approach on a realistic technical use case.
\end{itemize}
The remainder of the paper is organized as follows. In Section \ref{Sec:RelWork}, we provide an overview on related work. In Section \ref{Sec:DSAL}, we introduce the algorithm for safe active learning of time-series models. Section \ref{Sec:Theory} provides a theoretical analysis, and in Section \ref{Sec:Eva}, we highlight our empirical evaluations in learning time-series model in several settings. The Appendix contains the proofs of the theoretical analysis section and some more experimental investigations.

\section{Related Work}
\label{Sec:RelWork}

Most existing work for safe exploration in unknown environments is in the reinforcement learning setting \cite{moldovan12a,geibel01,galichet13}. For example, the safe exploration in finite MDP relies on the restriction of suitable policies, ensuring ergodicity at a user-defined safety level \cite{moldovan12a}. In \cite{geibel01}, the ergodic assumption for the MDPs is dropped by introducing fatal absorbing states. In \cite{galichet13}, the authors consider the use of a multi-armed, risk-aware bandit setting to prevent hazards when exploring different tasks. Strategies for exploring unknown environments have also been reflected in the framework of global optimization with Gaussian processes \cite{auer02,srinivas12,guestrin05}. For example, \cite{guestrin05} propose an efficient submodular exploration criterion for near-optimal sensor placements, i.e.\ for discrete input spaces. In \cite{auer02}, a framework is presented which yields a compromise between exploration and exploitation through confidence bounds. In \cite{srinivas12}, the authors show that under reasonable assumptions, strong exploration guarantees can be given for Bayesian optimization with Gaussian processes.

Safe exploration using Gaussian processes (GP) has also been considered in the past, such as for safe active learning \cite{Schreiter15ecml} and safe Bayesian optimization \cite{berkenkamp16bayesian,Sui2018}. In \cite{Sui2018}, for example, a two-steps process is proposed for a safe exploration and efficient exploitation of identified safe areas. In safe active learning, \cite{Schreiter15ecml} proposes a method for safe exploration based on the GP variance for stationary, i.e. pointwise, measurements. In contrast to the work by \cite{Schreiter15ecml}, we consider the setting of safe dynamical exploration, i.e. using trajectory-wise measurements. This setting is especially useful when actively learning time-series models.               

The problem of active learning for time-series models has not yet been considered extensively in the machine learning literature. Work on related topics is mostly in the field of online design of experiments, e.g. \cite{Deflorian2010}. In \cite{Deflorian2010}, the authors employ a parametric model for learning dynamical processes, in which the data for model learning is obtained by exploring using the Fisher information matrix. In contrast to their work, we explore unknown environments by employing a criterion defined for the non-parametric GP model, while also taking into account safety requirements. Furthermore, our proposed exploration scheme is rigorously analyzed providing further algorithmic insights.

\section{Safe Active Learning for Time-Series Modeling}
\label{Sec:DSAL}

Our goal is to approximate an unknown function $f \!:\! X \! \subset \!\mathbb{R}^d\rightarrow Y \!\subset\!\mathbb{R}$. In the case of time-series models, e.g. the well-established \textit{nonlinear exogenous} (NX) model, the input space consists of discretized values of the so-called manipulated variables \cite{billings2013}. Thus, $\boldsymbol{x}_k$ at time $k$ can be given as 
\begin{equation*}
\displaystyle{\boldsymbol{x}_k \!=\! (\boldsymbol{u}_k,\boldsymbol{u}_{k-1},\ldots,\boldsymbol{u}_{k-{d_2}+1})} \, ,
\end {equation*}
where $(\boldsymbol{u}_k)_k$, 
$\boldsymbol{u}_k\in\mathbb{R}^{
	d_{\textcolor{red}{1}}
}$, \textcolor{red}{[and not $d_2$]} represents the discretized manipulated trajectory. Here, ${d_1}$ is the dimension of the system's input space, ${d_2}$ the dimension of the NX structure, and $d = {d_2}\cdot{d_1}$. In practice, the elements $\boldsymbol{u}_k$ are measured from physical systems and need not be equidistant, however, for notational convenience we assume equidistance in this setting. In general, the manipulated trajectories are continuous signals and can be explicitly controlled. In the model learning setting, we observe data in the form of $n$ consecutive piecewise trajectories $\mathcal{D}_n^f \!=\! \{\boldsymbol{\tau}_i,\boldsymbol{\rho}_i\}_{i=1}^n$, where the input trajectory $\boldsymbol{\tau}_i$ is a matrix and consists of $m$ input points of dimension $d$, i.e. $\boldsymbol{\tau}_i \!=\! (\boldsymbol{x}_1^i,\ldots,\boldsymbol{x}_m^i) \!\in\! \mathbb{R}^{d\times m}$. The output trajectory $\boldsymbol{\rho}_i$ contains $m$ corresponding output measurements, i.e. $\boldsymbol{\rho}_i \!=\! (y_1^i,\ldots,y_m^i) \!\in\! \mathbb{R}^{m}$.

The considered problem is to determine the next piecewise trajectory $\boldsymbol{\tau}_{n+1}$ as input excitation to the physical system such that the \textit{information gain} of $\mathcal{D}_{n+1}^f$ -- with respect to modeling $f$ -- is increased. At the same time, $\boldsymbol{\tau}_{n+1}$ should be determined subject to given safety constraints. In this section we elaborate on the setting and describe the algorithm. The definition of the considered information gain and corresponding analysis are provided in Section \ref{Sec:Theory}.

\subsection{Modeling Trajectories with Gaussian Processes}

We employ a Gaussian Process (GP) model to approximate the function $f$ (see \cite{rasmussen06} for more details). A GP is specified by its mean function $\mu(\boldsymbol{x})$ and covariance function $k(\boldsymbol{x}_i,\boldsymbol{x}_j)$, i.e. $f(\boldsymbol{x}_i) \!\sim\! \mathcal{GP}(\mu(\boldsymbol{x}_i),k(\boldsymbol{x}_i,\boldsymbol{x}_j))$. Given noisy observations of input and output trajectories, the joint distribution according to the GP prior is given as      
\begin{gather*}
 p \left( \boldsymbol{\Rho}_n  |  \boldsymbol{\Tau}_n \right) = \mathcal{N}\left(\boldsymbol{\Rho}_n | \boldsymbol{0}, \boldsymbol{K}_n \!+\! \sigma^2 \boldsymbol{I} \right) \, ,
\end{gather*}
where $\boldsymbol{\Rho}_n  \!\in\! \mathbb{R}^{n \cdot m}$ is a vector concatenating output trajectories and $\boldsymbol{\Tau}_n \!\in\! \mathbb{R}^{n \cdot m \times d}$ a matrix containing input trajectories. The covariance matrix is represented by $\boldsymbol{K}_n \!\in\! \mathbb{R}^{n \cdot m \times n \cdot m}$. In this paper, we employ the Gaussian kernel as the covariance function, i.e. $k(\boldsymbol{x}_i,\boldsymbol{x}_j) \!=\! \sigma_{f}^2 \exp(-\tfrac{1}{2}(\boldsymbol{x}_i \!-\! \boldsymbol{x}_j)^T \boldsymbol{\Lambda}_{f}^2(\boldsymbol{x}_i \!-\! \boldsymbol{x}_j))$, which is parametrized by $\boldsymbol{\theta}_f \!=\! (\sigma_{f}^2,\boldsymbol{\Lambda}_{f}^2)$. Furthermore, we have a zero vector $\boldsymbol{0} \!\in\! \mathbb{R}^{n \cdot m}$ as mean, an $n \cdot m$-dimensional identity matrix $\boldsymbol{I}$, and $\sigma^2$ as output noise variance (see \cite{rasmussen06}). Given the joint distribution, the predictive distribution $p(\boldsymbol{\rho}^*|\boldsymbol{\tau}^*, \mathcal{D}_n^f)$ for a new piecewise trajectory $\boldsymbol{\tau}^*$ can be expressed as
\begin{gather}
\label{eq:cond-var_1}
 p(\boldsymbol{\rho}^*|\boldsymbol{\tau}^*, \mathcal{D}_n^f)\ =\ \mathcal{N}\left(\boldsymbol{\rho}^* | \boldsymbol{\mu}(\boldsymbol{\tau}^*) , \boldsymbol{\Sigma}(\boldsymbol{\tau}^*) \right) \, ,
\end{gather}
with
\begin{equation}
\label{eq:cond-var_2}
\begin{aligned}
  \boldsymbol{\mu}(\boldsymbol{\tau}^*) &= \boldsymbol{k}(\boldsymbol{\Tau}_n,\boldsymbol{\tau}^*)^T (\boldsymbol{K}_n \!+\! \sigma^2 \boldsymbol{I})^{-1} \boldsymbol{\Rho}_n \, , \\
  \boldsymbol{\Sigma}(\boldsymbol{\tau}^*) &= \boldsymbol{k}^{**}(\boldsymbol{\tau}^*,\boldsymbol{\tau}^*) - \boldsymbol{k}(\boldsymbol{\Tau}_n,\boldsymbol{\tau}^*)^T (\boldsymbol{K}_n \!+\! \sigma^2 \boldsymbol{I})^{-1} \boldsymbol{k}(\boldsymbol{\Tau}_n,\boldsymbol{\tau}^*) \, ,
\end{aligned}
\end{equation}
where $\boldsymbol{k}^{**} \!\in\! \mathbb{R}^{m \times m}$ is a matrix with $k_{ij}^{**} \!=\! k(\boldsymbol{x}_i,\boldsymbol{x}_j)$. The matrix $\boldsymbol{k} \!\in\! \mathbb{R}^{n \cdot m \times m}$ contains kernel evaluations relating $\boldsymbol{\tau}^*$ to the previous $n$ input trajectories. As the covariance matrix $\boldsymbol{K}_n$ is fully occupied, the input points $\boldsymbol{x}$ are fully correlated within a piecewise trajectory, as well as across different trajectories; this enables the exploitation of high capacity correlations. However, due to the potentially large dimension $n \!\cdot\! m$, inverting the matrix $\boldsymbol{K}_n \!+\! \sigma^2 \boldsymbol{I}$ can be infeasible. GP approximation techniques can be employed, e.g. using sparse inducing inputs or variational approaches \cite{candela05,snelson06,titsias09}.

\subsection{Modeling the Safety Condition}

The safety status of the system is described by an unknown function $g: X \! \subset \!\mathbb{R}^d\rightarrow Z \!\subset\!\mathbb{R}$, mapping an input point $\boldsymbol{x}$ to a safety value $z$, which acts as a safety indicator. The values $z$ are computed using information from the system, and are designed such that all values equal or greater than zero are considered safe for the corresponding input $\boldsymbol{x}$. Example \ref{bsp:Safe_PressureSys} shows a construction for computing $z$ values. More examples can be found in the evaluation in Section \ref{Sec:Eva}.        
\begin{bsp}[A safety indicator for a high-pressure fluid system]
\label{bsp:Safe_PressureSys}
In a high-pressure fluid system, we can measure the pressure $\psi$ for a given input state $\boldsymbol{x}$. Additionally, we know the value of the maximal pressure $\psi_{max}$ which can act on the physical system. Given the current pressure $\psi$, the safety values $z$ can be computed as 
  \begin{equation}
    z(\psi) = 1 - exp((\psi - \psi_{max})/\lambda_p) \, ,
  \end{equation}
	where $\lambda_p$ describes the decline, when $\psi$ increases towards $\psi_{max}$.  
\end{bsp}
Note that $z$ is continuous and, intuitively, indicates the distance of a given point $\boldsymbol{x}$ from the \textit{unknown} safety boundary in the input space. Thus, given the function $g$ -- or an estimate of it -- we can evaluate the level of safety for a trajectory $\boldsymbol{\tau}$. We consider a trajectory as safe, if the probability that its safety values $z$ are greater than zero is sufficiently large, i.e. 
\begin{gather*}
 \int_{z_1,\ldots,z_m \geq 0 } p(z_1,\ldots,z_m|\boldsymbol{\tau}) \, dz_1,\ldots,z_m > 1-\alpha \, ,
\end{gather*}
with $\alpha\in(0,1]$ representing the threshold for considering $\boldsymbol{\tau}$ unsafe. Given data $\mathcal{D}_n^g \!=\! \{\boldsymbol{\tau}_i,\boldsymbol{\zeta}_i\}_{i=1}^n$, with $\displaystyle{\boldsymbol{\zeta}_i \!=\! (z_1^i,\ldots,z_m^i) \!\in\! \mathbb{R}^{m}}$, we employ a GP to approximate the function $g$. The predictive distribution $p(\boldsymbol{\zeta}^*| \boldsymbol{\tau}^*, \mathcal{D}_n^g)$ given a piecewise trajectory $\boldsymbol{\tau}^*$ is then computed as
\begin{gather}
\label{eq:cond-var-safe}
 p(\boldsymbol{\zeta}^* | \boldsymbol{\tau}^*, \mathcal{D}_n^g)\ =\ \mathcal{N}\left(\boldsymbol{\zeta}^* | \boldsymbol{\mu}_g(\boldsymbol{\tau}^*)  , \boldsymbol{\Sigma}_g(\boldsymbol{\tau}^*) \right) \, ,
\end{gather}
with $\boldsymbol{\mu}_g(\boldsymbol{\tau}^*)$ and $\boldsymbol{\Sigma}_g(\boldsymbol{\tau}^*)$ being the corresponding mean and covariance. The quantities $\boldsymbol{\mu}_g$ and $\boldsymbol{\Sigma}_g$ are computed as shown in Eq. (\ref{eq:cond-var_2}), then with $\boldsymbol{Z}_n  \!\in\! \mathbb{R}^{n \cdot m}$ as the target vector concatenating all $\boldsymbol{\zeta}_i$. By employing a GP for approximating $g$, the safety condition $\xi(\boldsymbol{\tau})$ for a trajectory $\boldsymbol{\tau}$ can be computed as
\begin{gather}
\label{eq:safety-int}
\xi(\boldsymbol{\tau}) = \int_{z_1,\ldots,z_m \geq 0} \mathcal{N} \left(\boldsymbol{\zeta} | \boldsymbol{\mu}_g(\boldsymbol{\tau}),\boldsymbol{\Sigma}_g(\boldsymbol{\tau}) \right) \, dz_1,\ldots,z_m > 1-\alpha \, .
\end{gather}
In general, the computation of $\xi(\boldsymbol{\tau})$ is analytically intractable, and thus needs to rely on some approximation, such as Monte-Carlo sampling or expectation propagation \cite{Minka01}.

\subsection{The Algorithm}
\label{Sec:SafeAlgo}

In the previous sections, we elaborated on the modeling of the predictive distribution and the safety condition for a given piecewise trajectory $\boldsymbol{\tau}$ in the input space. For efficiently choosing an optimal $\boldsymbol{\tau}$, the trajectory needs to be appropriately parametrized. 
The most straightforward possibility is to parametrize in the input space. We illustrate the trajectory parametrization in the following Example \ref{bsp:trajectories}, using ramp parameterization.      
\begin{bsp}[Consecutive ramps as piecewise trajectory]
\label{bsp:trajectories}
A ramp can be parametrized with its start and end point. As the start point is the last point of the previous trajectory, the end point $\boldsymbol{\eta}$ is the only free quantity, and therefore a ramp can be parametrized as
  \begin{equation}
   \begin{aligned}
      \boldsymbol{\tau}(\boldsymbol{\eta}) &= \left(\boldsymbol{x}_1(\boldsymbol{\eta}), \ldots, \boldsymbol{x}_m(\boldsymbol{\eta}) \right)\\
      \text{with for } 1 \leq k \leq m:\ \boldsymbol{x}_k(\boldsymbol{\eta}) &= \left( \boldsymbol{u}_0 + \frac{k}{m}(\boldsymbol{\eta} - \boldsymbol{u}_0),\ldots, \boldsymbol{u}_0 + \frac{k-d_2+1}{m}(\boldsymbol{\eta} - \boldsymbol{u}_0) \right)
   \end{aligned}
  \end{equation}
where $\boldsymbol{u}_0$ is the start point of the ramp. For $k\!-\!i\geq 0$, the manipulated input variable is on the currently planned trajectory, and for $k\!-\!i<0$ it can be read from the list of already executed trajectories.
\end{bsp}
Given a trajectory parametrization with its predictive distribution in Eq. (\ref{eq:cond-var_1}) and safety condition in Eq. (\ref{eq:safety-int}), the next piecewise trajectory $\boldsymbol{\tau}_{n+1}(\boldsymbol{\eta}^*)$ can be obtained by solving the following constrained optimization problem 
\begin{eqnarray}
 \label{eq:opt-prob_line1}
 \boldsymbol{\eta}^* &= \argmax_{\boldsymbol{\eta} \in \Pi}\ \ \mathcal{I} \left(\boldsymbol{\Sigma}(\boldsymbol{\eta}) \right)      \\
  \label{eq:opt-prob_line2}
 & s.t.\ \ \xi \left(\boldsymbol{\eta}\right) > 1-\alpha \,    ,
\end{eqnarray}
where $\boldsymbol{\eta}$ represents our trajectory parametrization, ${\Pi}$ is domain of the manipulated variable, and $\mathcal{I}$ an optimality criterion we will discuss later. As shown in Eq. (\ref{eq:opt-prob_line1}), we employ the predictive variance $\boldsymbol{\Sigma}$ from Eq. (\ref{eq:cond-var_1}) for the exploration, which is common in the active learning setting, especially in combination with a GP model \cite{Schreiter15ecml,MacKay92}. In contrast to previous work, due to the nature of the considered trajectory, we have a \textit{covariance matrix} $\boldsymbol{\Sigma}$ instead of the variance value usually employed in the active learning and Bayesian optimization setting \cite{srinivas12}. The covariance matrix is mapped by an optimality criterion $\mathcal{I}$ to a real number, as indicated by Eq. (\ref{eq:opt-prob_line1}). Various optimality criteria can be used for $\mathcal{I}$, as discussed in the system identification literature \cite{Fedorov12}. For example, $\mathcal{I}$ can be the \textit{determinant}, equivalent to maximizing the volume of the predictive confidence ellipsoid of the multi-normal distribution, the \textit{trace}, equivalent to maximizing the average predictive variance, or the \textit{maximal eigenvalue}, equivalent to maximizing the largest axis of the predictive confidence ellipsoid \cite{Fedorov12}.

The constraint in Eq. (\ref{eq:opt-prob_line2}) represents a probabilistic safety criterion, motivated by our probabilistic modeling approach for the safety. The probabilistic approach flexibly allows us to control the trade-off between exploration speed and safety consideration. For example, a 100\% safe exploration would keep the algorithm from leaving the initial safe area and, hence, would not lead to an exploration of new areas. On the other hand, a 0\% safe exploration will explore without any safe considerations which will result in many safety violations. This trade-off provides the users an additional degree of freedom, depending on how much they ``trust'' the behavior of their physical systems. 
\begin{algorithm}[t]
\caption{Safe Active Learning for Time-Series Modeling}
\label{fig:algo}
\begin{algorithmic}[1]
\STATE Input: Safety threshold $0 \!\leq\! \alpha \!\leq\! 1$
\STATE Initialization: Collect $n_0$ safe trajectories, i.e. $\mathcal{D}_{0}^{f,g}\!=\!\{\boldsymbol{\tau}_i,\boldsymbol{\rho}_i,\boldsymbol{\zeta}_i\}_{i=1}^{n}$ with $n\!=\!n_0$.  
\FOR{$k = 1$ \TO $N$}
\STATE Update regression model approximating $f$ using $\mathcal{D}_{k-1}^{f}\!=\!\{\boldsymbol{\tau}_i,\boldsymbol{\rho}_i\}_{i=1}^{n}$, according to Eq. (\ref{eq:cond-var_1})
\STATE Update safety model approximating $g$ using $\mathcal{D}_{k-1}^{g}\!=\!\{\boldsymbol{\tau}_i,\boldsymbol{\zeta}_i\}_{i=1}^{n}$, according to Eq. (\ref{eq:cond-var-safe})
\STATE Determine new piecewise trajectory $\boldsymbol{\tau}_{n+1}$, by optimizing $\boldsymbol{\eta}$ according to Eq. (\ref{eq:opt-prob_line1} and \ref{eq:opt-prob_line2})
\STATE Execute $\boldsymbol{\tau}_{n+1}$ on the physical system, while measuring $\boldsymbol{\rho}_{n+1}$ and $\boldsymbol{\zeta}_{n+1}$
\STATE Include new trajectories into $\mathcal{D}_{k-1}^f$ and $\mathcal{D}_{k-1}^g$ with $n\!=\!n\!+\!1$. 
\ENDFOR
\STATE Update and return regression model and safety model
\end{algorithmic}
\end{algorithm}

Algorithm \ref{fig:algo} summarizes the basic steps of the proposed algorithm, which needs to be initialized by $n_0$ safe trajectories. In practice, the initial trajectories are located in a small, safe region chosen beforehand using prior knowledge. The incremental updates of the GP models for new data, i.e. steps 4 and 5 in Algorithm \ref{fig:algo}, can be efficiently performed, e.g. through rank-one updates \cite{Seeger07lowrank}. The optimization problem in Eq. (\ref{eq:opt-prob_line1}) can be solved using gradient-based optimization approaches, e.g. \cite{Byrd2000,Coleman1992}. In this paper, we employ the NX-structure in combination with the GP model for time-series modeling. However, this approach can also be extended to the general nonlinear auto-regressive exogenous case \cite{billings2013}, i.e. a GP with NARX input structure
$ \boldsymbol{x}_k \!=\! ( $ 
\textcolor{red}{\deleted{ $\textcolor{red}{y}_{\textcolor{red}{k} }$, } }
$y_{k-1},\ldots,y_{k-q}, u_k,u_{k-1},\ldots,u_{k-d_{\textcolor{red}{2}} \textcolor{red}{+ 1} })$ 
\textcolor{red}{[and not $u_{k-d}$]} \textcolor{red}{with q determining the length of the output feedback}.
In this case, for optimization and planning of the next piecewise trajectories, one can use the predictive mean of $p(\boldsymbol{\rho}|\boldsymbol{\tau},\mathcal{D}_n^f)$ as surrogate for $y_k$. Note that the input excitation is still performed through the manipulated variable $u_k$ in the case of NARX.


\section{Theoretical Results}
\label{Sec:Theory}

In this section, we provide some results on the theoretical analysis of the proposed approach. First, we investigate the safety aspect of the algorithm. In Section \ref{Sec:TheoryDetInfo}, we provide a bound on the decay rate of the predictive variances for the case when the criterion $\mathcal{I}$ is a determinant, i.e. $\mathcal{I} \left(\boldsymbol{\Sigma}(\boldsymbol{\eta}) \right) \!=\!  \det \left(\boldsymbol{\Sigma}(\boldsymbol{\eta}) \right)$. The proofs can be found in the Appendix.   

\subsection{Safe Exploration}

To satisfy the safety requirements, it is necessary to bound the probability of failures during exploration. Theorem \ref{thm:SafeBound} provides an upper bound on the probabilities for unsafe trajectories.
\begin{theorem}
\label{thm:SafeBound}
Let us assume that we have recorded $n_0$ initial safe trajectories, and that their observations are enough to model $g$ well, in the sense that our GP quantifies the uncertainty of predictions for $g$ correctly, i.e. $P(\mu_g - \nu \sigma_g \leq z \leq \mu_g + \nu \sigma_g) \!=\! \mbox{Erf}(\nu / \sqrt{2})$ for all $\nu \geq 0$. Let $\delta \!\in\! [0,1]$ be the desired failure probability when determining the next $N$ consecutive piecewise trajectories. Set $\alpha \!=\! \delta / N$ and let this $\alpha$ be the probability bound for a trajectory being unsafe (as in Eq. (\ref{eq:safety-int})). Then, the iterative exploration for the next $N$ trajectories is unsafe with probability at most $\delta$, i.e.
\begin{gather*}
 P \left(  \cup_{i=n_0+1}^{n_0+N}  \left\{  g(\boldsymbol{x}_{j}^i)\!<\!0 \text{ for some } 1\leq j \leq m    |      \xi(\boldsymbol{\tau}_i) \!>\!  1\!-\!\alpha \right\}   \right) \leq  \delta .
\end{gather*}

\end{theorem}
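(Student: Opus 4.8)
The plan is to bound the ``some of the next $N$ trajectories is unsafe'' event by a union bound over the $N$ exploration steps, and then to control each individual per-trajectory failure probability using the calibration hypothesis on $g$ together with the fact that the optimizer in (\ref{eq:opt-prob_line1})--(\ref{eq:opt-prob_line2}) only ever returns trajectories that are feasible for the safety constraint $\xi(\boldsymbol{\tau})>1-\alpha$. Since $\alpha$ is chosen as $\delta/N$, $N$ per-step failure probabilities of at most $\alpha$ sum to exactly $\delta$.

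Concretely, the first step is to write
\[
 P\!\left(\bigcup_{i=n_0+1}^{n_0+N}\{\exists j:\ g(\boldsymbol{x}_j^i)<0\}\right)\ \le\ \sum_{i=n_0+1}^{n_0+N} P\!\left(\exists j:\ g(\boldsymbol{x}_j^i)<0\right),
\]
so that it suffices to show the $i$-th summand is at most $\alpha$. (The conditioning on $\xi(\boldsymbol{\tau}_i)>1-\alpha$ appearing in the statement is automatic, since Algorithm~\ref{fig:algo} only executes trajectories returned by the constrained optimization, hence always satisfying (\ref{eq:opt-prob_line2}).)

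For the second step I would fix a step $i$ and condition on the history $\mathcal{D}_{i-1}^{g}$ collected before it. Given this history the trajectory $\boldsymbol{\tau}_i=\boldsymbol{\tau}_i(\boldsymbol{\eta}^*)$ is determined and satisfies $\xi(\boldsymbol{\tau}_i)>1-\alpha$ by feasibility. The calibration hypothesis is then invoked to identify the GP posterior predictive $\mathcal{N}(\boldsymbol{\zeta}\mid\boldsymbol{\mu}_g(\boldsymbol{\tau}_i),\boldsymbol{\Sigma}_g(\boldsymbol{\tau}_i))$ with the true conditional law of the safety values $(g(\boldsymbol{x}_1^i),\dots,g(\boldsymbol{x}_m^i))$ along $\boldsymbol{\tau}_i$, so that by the definition of $\xi$ in (\ref{eq:safety-int}),
\[
 P\!\left(\exists j:\ g(\boldsymbol{x}_j^i)<0 \,\middle|\, \mathcal{D}_{i-1}^{g}\right)\ =\ 1-\xi(\boldsymbol{\tau}_i)\ <\ \alpha .
\]
Taking expectation over the history (tower property) yields $P(\exists j:\ g(\boldsymbol{x}_j^i)<0)<\alpha$, and substituting into the union bound gives $\le N\alpha=\delta$.

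The hard part is making the middle step rigorous. Two points need care. First, the calibration assumption as literally stated, $P(\mu_g-\nu\sigma_g\le z\le\mu_g+\nu\sigma_g)=\mbox{Erf}(\nu/\sqrt2)$, is a statement about the one-dimensional marginal predictive, whereas $\xi(\boldsymbol{\tau})$ is an orthant probability of the full $m$-dimensional Gaussian $\mathcal{N}(\boldsymbol{\zeta}\mid\boldsymbol{\mu}_g(\boldsymbol{\tau}),\boldsymbol{\Sigma}_g(\boldsymbol{\tau}))$; one therefore has to read the hypothesis in the stronger sense that the entire joint posterior over a trajectory is well-specified. Second, the data feeding the $i$-th posterior are themselves collected adaptively (each $\boldsymbol{\tau}_i$ depends on $\boldsymbol{\zeta}_1,\dots,\boldsymbol{\zeta}_{i-1}$), so the calibration identity must be assumed to hold conditionally on every realizable history $\mathcal{D}_{i-1}^g$, which is exactly what justifies the conditional bound above. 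Once these readings are fixed, the remainder is just the union bound and the arithmetic $N\alpha=\delta$.
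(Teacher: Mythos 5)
Your proposal is correct and follows essentially the same route as the paper's own proof: a union bound over the $N$ planning steps combined with the observation that the feasibility of the constraint $\xi(\boldsymbol{\tau}_i)>1-\alpha$, read through the calibration hypothesis, bounds each per-trajectory failure probability by $\alpha$, giving $N\alpha=\delta$. You are in fact somewhat more explicit than the paper about the two delicate points — conditioning on the adaptively collected history (the paper only remarks that $\xi(\boldsymbol{\tau}_i)$ and $g(\boldsymbol{x}_j^i)$ are random variables through the data) and the need to read the one-dimensional calibration assumption as well-specification of the full $m$-dimensional joint predictive used in $\xi$ — but these are clarifications of the same argument rather than a different one.
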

Theorem \ref{thm:SafeBound} supplies us with a useful rule of thumb to select $\alpha$ for sequentially determining the next $N$ trajectories.

\subsection{Decay of Predictive Variance}
\label{Sec:TheoryDetInfo}

The remainder of the analysis is to show that the proposed exploration scheme makes the predictive uncertainty $\boldsymbol{\Sigma}$ decrease as $n$ increases. In this paper we use the \textit{determinant} of $\boldsymbol{\Sigma}$ as an exploration criterion, which has been shown to have a close relationship to the information gain \cite{MacKay92,srinivas12}, defined as the mutual information $I$. 

First, we point out that this relationship still holds true in case of trajectories as observations. Subsequently, we introduce the maximum information gain as an upper bound, which can further be used to show the decrease of the predictive uncertainty. Lemma \ref{lem:MuteSigma} clarifies the relationship between determinant and mutual information. Let us denote the predictive variance after recording $i\!-\!1$ trajectories as $\boldsymbol{\Sigma}_{i-1}(\boldsymbol{\tau}_i)$, $\boldsymbol{\Sigma}_{0}(\boldsymbol{\tau}_1) = \boldsymbol{k}^{**}(\boldsymbol{\tau}_1,\boldsymbol{\tau}_1)$, and set $\tilde{\boldsymbol{\rho}}_i = \left( f(\boldsymbol{x}_1^i),\ldots,f(\boldsymbol{x}_m^i)\right)$.
\begin{lemma} 
\label{lem:MuteSigma}
The mutual information $I( \{\boldsymbol{\rho}_i\}_{i=1}^n ; \{\tilde{\boldsymbol{\rho}}_i\}_{i=1}^n )$ can be related to the predictive co-variances $\boldsymbol{\Sigma}_{i-1}(\tau_i)$ as follows
 \begin{gather*}
  I\left( \{\boldsymbol{\rho}_i\}_{i=1}^n ;  \{\tilde{\boldsymbol{\rho}}_i\}_{i=1}^n \right) =   1/2 \sum_{i=1}^n \log  | \boldsymbol{I}_m + \sigma^{-2} \boldsymbol{\Sigma}_{i-1}(\boldsymbol{\tau}_i)| 
\end{gather*}
\end{lemma}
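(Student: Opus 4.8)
The plan is to exploit the chain rule for mutual information together with the fact that, for jointly Gaussian variables, conditional mutual information is expressible through conditional covariances. Write $Y_n := \{\boldsymbol{\rho}_i\}_{i=1}^n$ for the noisy observations and $\tilde Y_n := \{\tilde{\boldsymbol{\rho}}_i\}_{i=1}^n$ for the underlying latent function values. First I would apply the chain rule in the form
\begin{equation*}
 I(Y_n ; \tilde Y_n) = \sum_{i=1}^n I\bigl(\boldsymbol{\rho}_i ; \tilde Y_n \mid \boldsymbol{\rho}_1,\ldots,\boldsymbol{\rho}_{i-1}\bigr),
\end{equation*}
and then argue that each summand equals $I(\boldsymbol{\rho}_i ; \tilde{\boldsymbol{\rho}}_i \mid \boldsymbol{\rho}_1,\ldots,\boldsymbol{\rho}_{i-1})$, i.e. that conditioning on the full latent vector $\tilde Y_n$ adds nothing beyond the block $\tilde{\boldsymbol{\rho}}_i$ once the earlier observations are fixed. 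This is exactly the statement that $\boldsymbol{\rho}_i$ depends on the rest of the latent process only through $\tilde{\boldsymbol{\rho}}_i = (f(\boldsymbol{x}^i_1),\ldots,f(\boldsymbol{x}^i_m))$, because $\boldsymbol{\rho}_i = \tilde{\boldsymbol{\rho}}_i + \boldsymbol{\varepsilon}_i$ with $\boldsymbol{\varepsilon}_i\sim\mathcal N(\boldsymbol 0,\sigma^2\boldsymbol I_m)$ independent of everything else — standard GP-regression independence structure, as in the pointwise derivation of \cite{srinivas12}.

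Next I would evaluate a single term. Using $I(A;B\mid C) = H(A\mid C) - H(A\mid B,C)$, the term is
\begin{equation*}
 I(\boldsymbol{\rho}_i ; \tilde{\boldsymbol{\rho}}_i \mid \boldsymbol{\rho}_{1:i-1}) = H(\boldsymbol{\rho}_i \mid \boldsymbol{\rho}_{1:i-1}) - H(\boldsymbol{\rho}_i \mid \tilde{\boldsymbol{\rho}}_i, \boldsymbol{\rho}_{1:i-1}).
\end{equation*}
The second entropy is just the entropy of the noise $\boldsymbol{\varepsilon}_i$, namely $\tfrac12\log\bigl((2\pi e)^m \sigma^{2m}\bigr) = \tfrac12\log\bigl|2\pi e\,\sigma^2\boldsymbol I_m\bigr|$. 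For the first, note that conditioned on $\boldsymbol{\rho}_{1:i-1}$ the latent block $\tilde{\boldsymbol{\rho}}_i$ is Gaussian with covariance $\boldsymbol{\Sigma}_{i-1}(\boldsymbol{\tau}_i)$ — this is precisely the predictive covariance defined in Eq.~(\ref{eq:cond-var_2}) — so $\boldsymbol{\rho}_i = \tilde{\boldsymbol{\rho}}_i + \boldsymbol{\varepsilon}_i$ has conditional covariance $\boldsymbol{\Sigma}_{i-1}(\boldsymbol{\tau}_i) + \sigma^2\boldsymbol I_m$, giving $H(\boldsymbol{\rho}_i\mid\boldsymbol{\rho}_{1:i-1}) = \tfrac12\log\bigl|2\pi e\,(\boldsymbol{\Sigma}_{i-1}(\boldsymbol{\tau}_i)+\sigma^2\boldsymbol I_m)\bigr|$. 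Subtracting, the $2\pi e$ factors and one factor of $\sigma^2\boldsymbol I_m$ cancel via $\log|\boldsymbol A| - \log|\boldsymbol B| = \log|\boldsymbol B^{-1}\boldsymbol A|$, leaving $\tfrac12\log\bigl|\boldsymbol I_m + \sigma^{-2}\boldsymbol{\Sigma}_{i-1}(\boldsymbol{\tau}_i)\bigr|$. Summing over $i$ yields the claim.

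The main obstacle is the second step — rigorously justifying that the per-step term collapses from $I(\boldsymbol{\rho}_i;\tilde Y_n\mid\boldsymbol{\rho}_{1:i-1})$ to $I(\boldsymbol{\rho}_i;\tilde{\boldsymbol{\rho}}_i\mid\boldsymbol{\rho}_{1:i-1})$, and more subtly that the conditional covariance of $\tilde{\boldsymbol{\rho}}_i$ given $\boldsymbol{\rho}_{1:i-1}$ is indeed the quantity $\boldsymbol{\Sigma}_{i-1}(\boldsymbol{\tau}_i)$ from Eq.~(\ref{eq:cond-var_2}) and not something involving conditioning on latent values. Here one must be careful that the "predictive variance after recording $i-1$ trajectories" in the lemma statement is the posterior over the \emph{noise-free} $f$-values at the query inputs, which is standard but worth stating explicitly; the noise only enters through the $\sigma^2\boldsymbol I$ term inside the inverse in Eq.~(\ref{eq:cond-var_2}) and through the extra additive $\sigma^2\boldsymbol I_m$ on $\boldsymbol{\rho}_i$. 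Everything else is bookkeeping with Gaussian entropies and determinant identities. A clean way to package the whole argument is to invoke $I(Y_n;\tilde Y_n) = H(Y_n) - H(Y_n\mid\tilde Y_n)$ directly: $H(Y_n\mid\tilde Y_n) = \tfrac12\log\bigl|2\pi e\,\sigma^2\boldsymbol I_{nm}\bigr|$ since $Y_n$ is then pure independent noise, while $H(Y_n)$ telescopes through the chain rule $H(Y_n) = \sum_i H(\boldsymbol{\rho}_i\mid\boldsymbol{\rho}_{1:i-1})$ with each term as computed above — this avoids the conditional-independence subtlety entirely and is the route I would actually write up.
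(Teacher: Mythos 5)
Your proposal is correct and, in its final form, coincides with the paper's own proof: the authors likewise write $I = H(\{\boldsymbol{\rho}_i\}) - H(\{\boldsymbol{\rho}_i\}\,|\,\{\tilde{\boldsymbol{\rho}}_i\})$, evaluate the second term as the entropy of pure noise $\sigma^2\boldsymbol{I}_{nm}$, and telescope the first via the entropy chain rule with $H(\boldsymbol{\rho}_i\,|\,\boldsymbol{\rho}_{1:i-1}) = \tfrac12\log\lvert 2\pi e(\sigma^2\boldsymbol{I}_m + \boldsymbol{\Sigma}_{i-1}(\boldsymbol{\tau}_i))\rvert$. Your initial detour through the chain rule for mutual information is unnecessary, as you yourself note, and the packaging you say you would actually write up is exactly what the paper does.
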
  
Next, we introduce the maximum information gain after observing $n$ trajectories as 
$\gamma_{n} \! \coloneqq \! {\max}_{\{\boldsymbol{\tau}_i\}_{i=1}^n\subset X^m} I(\{\boldsymbol{\rho}_i\}_{i=1}^n ;  \{\tilde{\boldsymbol{\rho}}_i\}_{i=1}^n)$,
(see Srinivas et al. \cite{srinivas12} for more details). The maximum information gain is the information which could be gathered when exploring the system in a non-iterative way, by optimally designing all trajectories simultaneously (which is in practice hard as it would require a solution of a high dimensional optimization problem, and would not allow us to incorporate safety information from observations during the experiment). According to Srinivas et al. (\cite{srinivas12}, Theorem 5) the maximum information gain satisfies $\gamma_n\ = O\left( log (n)^{d+1}\right)$, i.e. the maximum information grows slower than the number of additional trajectories. This will be crucial later on, but first we investigate the relation between the determinant of the covariance and $\gamma_n$. Using Lemma \ref{lem:MuteSigma} the determinant of the covariance can be bounded, as given in Lemma \ref{lem:AverageVariance}.   
\begin{lemma}
\label{lem:AverageVariance}
After observing $n$ trajectories $\{ \boldsymbol{\tau}_i \}_{i=1}^{n}$ (according to Eq. (\ref{eq:opt-prob_line1})), the determinant of the covariance is upper bounded by 
\begin{equation*}
  \frac{1}{n}\sum_{i=1}^n | \boldsymbol{\Sigma}_{i-1}(\boldsymbol{\tau}_i) | \leq C\ \frac{\gamma_{n}}{n},
\end{equation*}
where $\boldsymbol{\Sigma}_{i-1}$ is the predictive variance computed using the previous $\textstyle i\!-\!1$ trajectories,  $\gamma_{n}$ is the maximum information gain, and $C \!=\! 2 \sigma_f^{2m}/{log(1+\sigma^{-2m}\sigma_f^{2m})}$ is a constant. 
\end{lemma}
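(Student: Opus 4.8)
The plan is to reduce the determinant bound to the mutual-information identity of Lemma~\ref{lem:MuteSigma} by an elementary scalar inequality applied eigenvalue by eigenvalue, and then to bound the resulting mutual information by the maximum information gain $\gamma_n$.

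First I would establish a uniform bound on each predictive determinant. Being a conditional covariance, $\boldsymbol{\Sigma}_{i-1}(\boldsymbol{\tau}_i)$ is positive semidefinite, and by Eq.~(\ref{eq:cond-var_2}) its $j$-th diagonal entry equals $k(\boldsymbol{x}_j^i,\boldsymbol{x}_j^i)$ minus a nonnegative quadratic form, hence is at most $k(\boldsymbol{x}_j^i,\boldsymbol{x}_j^i) = \sigma_f^2$ for the Gaussian kernel. Hadamard's inequality then gives $|\boldsymbol{\Sigma}_{i-1}(\boldsymbol{\tau}_i)| \le \sigma_f^{2m}$.

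The core step is the per-trajectory estimate $|\boldsymbol{\Sigma}| \le C'\log|\boldsymbol{I}_m + \sigma^{-2}\boldsymbol{\Sigma}|$ for any positive semidefinite $\boldsymbol{\Sigma}$ with $|\boldsymbol{\Sigma}| \le \sigma_f^{2m}$, where $C' := \sigma_f^{2m}/\log(1+\sigma^{-2m}\sigma_f^{2m})$. Writing $\lambda_1,\dots,\lambda_m \ge 0$ for the eigenvalues, expanding $\prod_j(1+\sigma^{-2}\lambda_j)$ and discarding the nonnegative cross terms yields $\prod_j(1+\sigma^{-2}\lambda_j) \ge 1 + \sigma^{-2m}\prod_j\lambda_j$, so $\log|\boldsymbol{I}_m + \sigma^{-2}\boldsymbol{\Sigma}| \ge \log(1+\sigma^{-2m}D)$ with $D := \prod_j\lambda_j = |\boldsymbol{\Sigma}|$. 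It thus suffices to show $D \le C'\log(1+\sigma^{-2m}D)$ for $D \in [0,\sigma_f^{2m}]$; setting $u = \sigma^{-2m}D$ this reads $\phi(u):= u/\log(1+u) \le \sigma^{-2m}C'$, and since $\phi$ is increasing on $(0,\infty)$ (from $\log(1+u) \ge u/(1+u)$ one gets $\phi'\ge 0$) its maximum over $[0,\sigma^{-2m}\sigma_f^{2m}]$ is attained at the right endpoint, where it equals $\sigma^{-2m}C'$ by the choice of $C'$.

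Finally I would sum this inequality over $i = 1,\dots,n$, use Lemma~\ref{lem:MuteSigma} to rewrite $\sum_{i=1}^n \log|\boldsymbol{I}_m + \sigma^{-2}\boldsymbol{\Sigma}_{i-1}(\boldsymbol{\tau}_i)| = 2\,I(\{\boldsymbol{\rho}_i\}_{i=1}^n;\{\tilde{\boldsymbol{\rho}}_i\}_{i=1}^n)$, and bound this mutual information by $\gamma_n$ (the trajectories generated by Eq.~(\ref{eq:opt-prob_line1}) form one admissible configuration in the maximization defining $\gamma_n$). This gives $\sum_{i=1}^n |\boldsymbol{\Sigma}_{i-1}(\boldsymbol{\tau}_i)| \le 2C'\gamma_n = C\,\gamma_n$, and dividing by $n$ yields the claim; the factor $2$ in $C$ is exactly the $1/2$ of Lemma~\ref{lem:MuteSigma}. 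The main obstacle is the core step — in particular recognizing that the Hadamard endpoint $D = \sigma_f^{2m}$ is what pins down the constant, and handling the matrix-to-scalar passage cleanly via the eigenvalue product inequality — whereas the uniform bound and the summation are routine.
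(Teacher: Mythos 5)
Your proposal is correct and follows essentially the same route as the paper: a uniform bound $|\boldsymbol{\Sigma}_{i-1}(\boldsymbol{\tau}_i)|\le\sigma_f^{2m}$, the per-trajectory inequality $|\boldsymbol{\Sigma}|\le C'\log|\boldsymbol{I}_m+\sigma^{-2}\boldsymbol{\Sigma}|$ via the eigenvalue-product bound $1+\prod_j\sigma^{-2}\lambda_j\le\prod_j(1+\sigma^{-2}\lambda_j)$ and the endpoint-pinned scalar estimate, followed by Lemma~\ref{lem:MuteSigma} and $I\le\gamma_n$. The only differences are cosmetic (Hadamard's inequality in place of the paper's AM--GM/trace argument, and monotonicity of $u/\log(1+u)$ in place of concavity of $\log$), so no further changes are needed.
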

The first step in proving Lemma \ref{lem:AverageVariance} is to upper bound the predictive variance using the mutual information via Lemma \ref{lem:MuteSigma}. Subsequently, the mutual information is upper bounded by $\gamma_{n}$. Using Lemma \ref{lem:AverageVariance} and Theorem 5 in \cite{srinivas12}, we can provide a decay rate on the average determinant of predictive variances.
\textcolor{red}{Clarification/Errata: Theorem 2 contained a mistake. Corrected version is below and changes are marked in red color. }
\begin{theorem}
\label{thm:VarianceDecay}
 Let $\{  \tilde{\boldsymbol{\tau}}_i \}_{i=1}^n$ be $n$ \textcolor{red}{\deleted{arbitrary}} trajectories within a compact and convex domain $X$ \textcolor{red}{such that each $\tilde{\boldsymbol{\tau}}_i$ shares the same start point with $\boldsymbol{\tau}_i$  }, and $k$ be a kernel function such that $k(\cdot,\cdot)\leq 1$. If $\boldsymbol{\Sigma}_{i-1}$ come from 
  our exploration scheme Eq. (\ref{eq:opt-prob_line1}) (i.e. without safety considerations),
 then we have
\begin{gather*}
\frac{1}{n}\sum_{i=1}^n |\boldsymbol{\Sigma}_{i-1}(\tilde{\boldsymbol{\tau}}_i)| = \mathcal{O} \left(\frac{\log(n)^{d+1}}{n} \right) \, .
\end{gather*}
\end{theorem}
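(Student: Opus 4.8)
The plan is to exploit the defining property of our exploration scheme: at step $i$ it selects the trajectory \emph{maximizing} the determinant $|\boldsymbol{\Sigma}_{i-1}(\cdot)|$ among all trajectories that start at the point inherited from $\boldsymbol{\tau}_{i-1}$ (recall the parametrization in Example \ref{bsp:trajectories}, where once the start point is fixed the only free quantity is the endpoint $\boldsymbol{\eta}$). Hence any competitor trajectory $\tilde{\boldsymbol{\tau}}_i$ sharing this start point -- which is exactly the corrected hypothesis -- is a feasible candidate for that same maximization, so its predictive-covariance determinant cannot exceed the one actually achieved. The rest is then a short combination of Lemma \ref{lem:AverageVariance} and the rate $\gamma_n = \mathcal{O}(\log(n)^{d+1})$ of Srinivas et al.\ \cite{srinivas12}.

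In detail, I would first pin down the feasible set. Since we work without safety considerations, step $i$ solves $\boldsymbol{\eta}_i^\ast = \argmax_{\boldsymbol{\eta}\in\Pi}\,|\boldsymbol{\Sigma}_{i-1}(\boldsymbol{\tau}_i(\boldsymbol{\eta}))|$ and sets $\boldsymbol{\tau}_i = \boldsymbol{\tau}_i(\boldsymbol{\eta}_i^\ast)$, where the parametrization fixes the start of $\boldsymbol{\tau}_i(\boldsymbol{\eta})$ to the last point of $\boldsymbol{\tau}_{i-1}$. Because $X$ is compact and convex, a trajectory $\tilde{\boldsymbol{\tau}}_i \subset X$ with that same start point is realized by some admissible $\tilde{\boldsymbol{\eta}}_i\in\Pi$ (for the ramp/spline parametrizations, interpolating toward an endpoint in $X$ keeps the whole trajectory inside $X$). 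Optimality of $\boldsymbol{\eta}_i^\ast$ then gives, for every $i$,
\[
0 \;\le\; |\boldsymbol{\Sigma}_{i-1}(\tilde{\boldsymbol{\tau}}_i)| \;\le\; |\boldsymbol{\Sigma}_{i-1}(\boldsymbol{\tau}_i)| ,
\]
the lower bound holding because $\boldsymbol{\Sigma}_{i-1}(\cdot)$ is positive semidefinite.

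Next I would average over $i = 1,\dots,n$ and apply Lemma \ref{lem:AverageVariance}:
\[
\frac{1}{n}\sum_{i=1}^n |\boldsymbol{\Sigma}_{i-1}(\tilde{\boldsymbol{\tau}}_i)| \;\le\; \frac{1}{n}\sum_{i=1}^n |\boldsymbol{\Sigma}_{i-1}(\boldsymbol{\tau}_i)| \;\le\; C\,\frac{\gamma_n}{n},
\]
where $C$ is the finite constant from Lemma \ref{lem:AverageVariance} (its finiteness, and the applicability of the Srinivas bound, are exactly what the hypothesis $k(\cdot,\cdot)\le 1$ secures). Substituting $\gamma_n = \mathcal{O}(\log(n)^{d+1})$ -- which holds for the Gaussian kernel on compact $X\subset\mathbb{R}^d$ by Theorem 5 of \cite{srinivas12}, transported to the trajectory mutual information $I(\{\boldsymbol{\rho}_i\}_{i=1}^n;\{\tilde{\boldsymbol{\rho}}_i\}_{i=1}^n)$ via Lemma \ref{lem:MuteSigma}, as noted before the statement -- yields $\tfrac{1}{n}\sum_{i=1}^n |\boldsymbol{\Sigma}_{i-1}(\tilde{\boldsymbol{\tau}}_i)| = \mathcal{O}(\log(n)^{d+1}/n)$.

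The main obstacle is concentrated entirely in the first step: one must argue carefully that sharing a start point genuinely places $\tilde{\boldsymbol{\tau}}_i$ in the feasible set of the $i$-th maximization. This is precisely the gap that made the original ``arbitrary trajectories'' statement false -- a trajectory anchored far from every previously visited region keeps its predictive covariance close to the prior (close to $\sigma_f^{2m}$, hence close to $1$ under $k\le 1$), which does not decay with $n$ -- so the corrected hypothesis is exactly what makes the greedy comparison legitimate. Everything afterwards is a routine chaining of the already-established Lemma \ref{lem:AverageVariance} and the known $\gamma_n$ rate, requiring no further estimates.
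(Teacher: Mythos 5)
Your argument is exactly the paper's: the D-criterion optimality of $\boldsymbol{\tau}_i$ over all trajectories sharing its start point gives $|\boldsymbol{\Sigma}_{i-1}(\tilde{\boldsymbol{\tau}}_i)| \le |\boldsymbol{\Sigma}_{i-1}(\boldsymbol{\tau}_i)|$ termwise, and averaging plus Lemma \ref{lem:AverageVariance} with the Srinivas bound on $\gamma_n$ finishes it. Your added care about why the shared start point makes $\tilde{\boldsymbol{\tau}}_i$ feasible for the maximization (and why the original ``arbitrary'' hypothesis fails) is a correct and welcome elaboration of the same proof.
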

We sketch the proof here: as our algorithm (without safety considerations) always chooses the trajectory with the highest determinant (D criterion), the average determinant of an actively learned scheme is always higher than or equal to the average determinant of 
\textcolor{red}{any other trajectory $(\tilde{\boldsymbol{\tau}}_i)$ that shares the same start point. \deleted{
an arbitrary scheme.
}
}
Therefore, $\frac{1}{n}\sum_{i=1}^n | \boldsymbol{\Sigma}_{i-1}(\tilde{\boldsymbol{\tau}}_i)| \leq \frac{1}{n}\sum_{i=1}^n | \boldsymbol{\Sigma}_{i-1}(\boldsymbol{\tau}_i)|$, which is $\mathcal{O}(log(n)^{d+1} / {n})$ when employing Lemma \ref{lem:AverageVariance} and the Theorem 5 of \cite{srinivas12}.

By Theorem \ref{thm:VarianceDecay}, for any sequence of trajectories, the average of the determinants of their predictive covariances tends to zero. As the determinant corresponds to the volume of the confidence ellipsoid, we can conclude that the average volume of confidence ellipsoids tends to zero as well, indicating that on average, our predictions become precise. However, as the safety constraint in Eq. (\ref{eq:opt-prob_line2}) changes at every iteration, we extend the statement of Theorem \ref{thm:VarianceDecay}:
\begin{theorem}
\label{thm:DecayForSafe}
Let us assume that there exists a compact and convex domain $X$, and a kernel function $k$ such that $k(\cdot,\cdot)\leq 1$, that covers the whole area which is explorable (independent of whether it is safe or not). Then, the statement of Theorem \ref{thm:VarianceDecay} still holds for our Algorithm 1 with iteration-dependent safe areas $S_i$. 
\end{theorem}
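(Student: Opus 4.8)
The plan is to follow the proof sketch of Theorem~\ref{thm:VarianceDecay} essentially line by line, isolating the two points at which ``no safety considerations'' was used and checking that each is restored by the new hypothesis, namely the existence of a single compact convex $X$, with a kernel satisfying $k(\cdot,\cdot)\le 1$ on it, that contains the whole explorable region. Write $S_i = \{\boldsymbol\eta\in\Pi : \xi(\boldsymbol\eta) > 1-\alpha\}$ for the safe set used at iteration $i$; by hypothesis $S_i\subseteq X$ for every $i$, so the trajectories $\boldsymbol\tau_1,\dots,\boldsymbol\tau_n$ actually executed by Algorithm~\ref{fig:algo} all lie in $X^m$. The two points are: (i) the comparison $|\boldsymbol\Sigma_{i-1}(\tilde{\boldsymbol\tau}_i)| \le |\boldsymbol\Sigma_{i-1}(\boldsymbol\tau_i)|$ between a reference trajectory and the one that is selected; and (ii) the chain Lemma~\ref{lem:MuteSigma} $\Rightarrow$ Lemma~\ref{lem:AverageVariance} $\Rightarrow$ $\gamma_n=\mathcal{O}(\log(n)^{d+1})$ that controls $\frac1n\sum_{i=1}^n |\boldsymbol\Sigma_{i-1}(\boldsymbol\tau_i)|$ along the run.

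For (ii) I would re-read the proof of Lemma~\ref{lem:AverageVariance} and observe that it never uses that the $\boldsymbol\tau_i$ were obtained by an \emph{unconstrained} maximization: it only invokes the chain-rule identity of Lemma~\ref{lem:MuteSigma} (a pure Gaussian-conditioning computation, independent of how the trajectories arise), the inequality $I(\{\boldsymbol\rho_i\}_{i=1}^n;\{\tilde{\boldsymbol\rho}_i\}_{i=1}^n) \le \gamma_n$ (true for \emph{any} tuple of trajectories in $X^m$, since $\gamma_n$ is the maximum of that mutual information over $X^m$), the elementary bound $|\boldsymbol\Sigma_{i-1}(\boldsymbol\tau_i)|\le\sigma_f^{2m}$, and the scalar estimate $s\le\tfrac{\sigma^{-2m}\sigma_f^{2m}}{\log(1+\sigma^{-2m}\sigma_f^{2m})}\log(1+s)$ on $[0,\sigma^{-2m}\sigma_f^{2m}]$. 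Since we have just argued that Algorithm~\ref{fig:algo}'s trajectories lie in $X^m$, Lemma~\ref{lem:AverageVariance} applies verbatim to the safe run, giving $\frac1n\sum_{i=1}^n|\boldsymbol\Sigma_{i-1}(\boldsymbol\tau_i)| \le C\,\gamma_n/n$; and because $X$ is compact and convex, Theorem~5 of \cite{srinivas12} still yields $\gamma_n=\mathcal{O}(\log(n)^{d+1})$, hence $\frac1n\sum_{i=1}^n|\boldsymbol\Sigma_{i-1}(\boldsymbol\tau_i)| = \mathcal{O}(\log(n)^{d+1}/n)$.

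For (i), the observation is simply that at iteration $i$ Algorithm~\ref{fig:algo} returns $\boldsymbol\eta_i^*=\argmax_{\boldsymbol\eta\in S_i}\det\boldsymbol\Sigma_{i-1}(\boldsymbol\eta)$ with the start point of $\boldsymbol\tau_i$ pinned to the end point of $\boldsymbol\tau_{i-1}$. Therefore, for any \emph{safe} reference trajectory $\tilde{\boldsymbol\tau}_i\in S_i$ that shares that start point, $|\boldsymbol\Sigma_{i-1}(\tilde{\boldsymbol\tau}_i)| \le |\boldsymbol\Sigma_{i-1}(\boldsymbol\tau_i)|$ still holds -- exactly the inequality used in Theorem~\ref{thm:VarianceDecay}, only with the comparison set narrowed from $\Pi$ to $S_i$. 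Averaging over $i$ and substituting the bound from step (ii) gives $\frac1n\sum_{i=1}^n|\boldsymbol\Sigma_{i-1}(\tilde{\boldsymbol\tau}_i)| = \mathcal{O}(\log(n)^{d+1}/n)$, which is the conclusion of Theorem~\ref{thm:VarianceDecay} carried over to Algorithm~\ref{fig:algo} with its moving safe sets.

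I expect the genuine obstacle -- and the reason the hypothesis on $X$ is stated as a separate assumption rather than absorbed into Theorem~\ref{thm:VarianceDecay} -- to be a definitional one: the maximum information gain $\gamma_n$, and the Srinivas bound on it, are meaningful only relative to a \emph{fixed} domain, whereas here the feasible set $S_i$ changes every iteration. Supplying one time-independent $X\supseteq\bigcup_i S_i$ (for instance the closure of the manipulated-variable domain $\Pi$, if already compact and convex, or its convex hull) is precisely what makes $\gamma_n$ well defined and lets the whole chain go through; everything else is the argument of Theorem~\ref{thm:VarianceDecay} with $\Pi$ replaced by $S_i$. I would also make explicit in the statement that the reference trajectories $\tilde{\boldsymbol\tau}_i$ are taken to be safe, i.e.\ $\tilde{\boldsymbol\tau}_i\in S_i$: the bound cannot be expected against trajectories lying in the never-queried unsafe part of $X$, where $\boldsymbol\Sigma_{i-1}$ has not been driven down.
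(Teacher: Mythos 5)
Your proposal is correct and follows essentially the same route as the paper, whose entire proof is the observation that $S_i\subseteq X$ (so the constrained maximizer is still compared against references in $S_i$ and $\gamma_n$ is still taken over the fixed $X$) followed by ``the remainder is analogous to Theorem~\ref{thm:VarianceDecay}.'' You have simply carried out that ``analogous'' part explicitly -- in particular your observations that Lemma~\ref{lem:AverageVariance} never uses unconstrained maximization and that the reference trajectories $\tilde{\boldsymbol\tau}_i$ must be taken in $S_i$ (with the shared start point) are exactly the details the paper leaves implicit.
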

Theorem \ref{thm:DecayForSafe} guarantees the decay of averaged determinants of covariances during safe exploration.


\section{Evaluations}
\label{Sec:Eva}

In section \ref{Sec:ToyExample} we illustrate the proposed approach using synthetic models, comparing our safe active learning approach (SAL-NX) with random selection using safety constraints. Subsequently, we employ the approach to learn a dynamics model of a physical, high-pressure fluid system in Section \ref{Sec:PressureExample}. For simplicity we employ ramps for the piecewise trajectory parametrization, but other curve parameterizations could also be used instead, e.g. spline parameterization. The form of the input trajectory has an impact on the excitation of the system, as comprehensively studied in the field of system identification \cite{pintelon2012}. 
\vspace{-.2cm}
\subsection{Simulated Experiments}
\label{Sec:ToyExample} 

\paragraph{Experiment 1}

In this experiment, a toy example is employed to illustrate the concept of input space exploration with piecewise trajectories and safe region detection. A function $f \!:\! \mathbb{R}^2 \! \rightarrow \! \mathbb{R}$, ${f}(\boldsymbol{x}) \!=\! (x^{(1)}\!-\!2)^2 \!+\! (x^{(1)}\!-\!2) (x^{(2)}\!-\!2)\!+\! (x^{(2)}\!-\!2)^2$ with $\boldsymbol{x}\!=\!(x^{(1)},x^{(2)})$ is used as the ground-truth. An observation is given by $y \!=\! f(\boldsymbol{x}) \!+\! \epsilon$ with $\epsilon\sim\mathcal{N}(0,1)$. The safe region is characterized by $g \!:\! \mathbb{R}^2 \! \rightarrow \! \mathbb{R}$ with $g(\boldsymbol{x}) \!=\! (x^{(1)} \!-\! 5)^2 \!+\! (x^{(1)} \!-\! 5) (x^{(2)} \!-\! 5) \!+\! (x^{(2)} \!-\! 5)^2$. The safety indicator $z$ is given as $z \!=\! -0.005 \cdot g(\boldsymbol{x}) \!+\! 1 \!+\!  \varsigma$ with $\varsigma \sim\mathcal{N}(0,1)$. It is considered to be safe for $z \! > \! 0$, otherwise unsafe.

We proceed as shown in Algorithm \ref{fig:algo}, where the piecewise trajectories are parametrized as 2D-ramps with $5$ discretization points (i.e. $m\!=\!5$, see Example \ref{bsp:trajectories}). We start with $10$ initial safe trajectories and consecutively determine new piecewise trajectories in the input space $X$, while also collecting outputs $y$ and computing safety indicator values $z$. As the exploration progresses, the approximation of $f$ and $g$ becomes more and more accurate, as shown in Fig. \ref{fig:snapshots}. The current estimation of the safe region (green area) gradually covers the actual safe area, and the approximation error gradually decreases (as shown in the subfigures). An illustrative video showing all iterations can be found in the Appendix.
\begin{figure}[t]
 \includegraphics[trim = 42mm 0mm 40mm 0mm, clip,width=1.\linewidth]{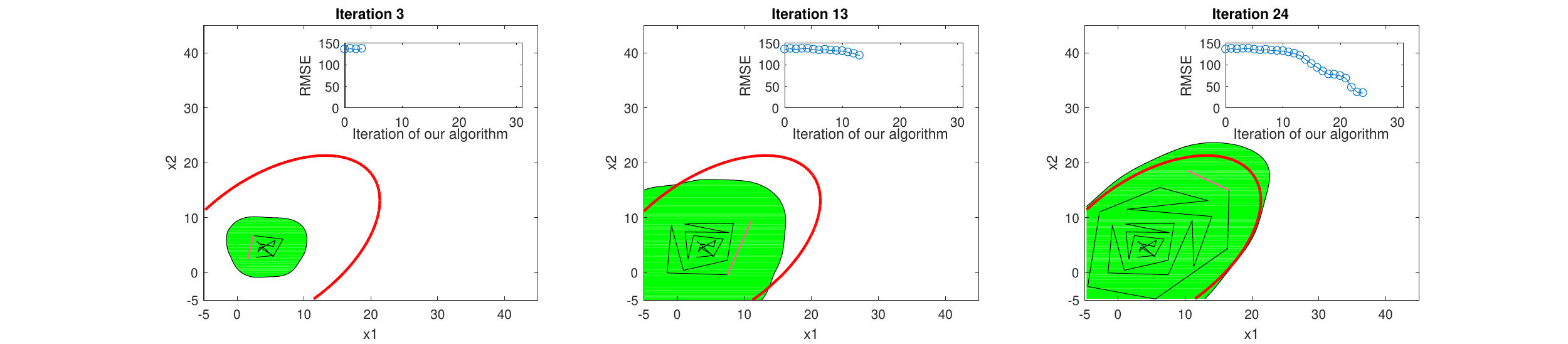}
\caption{\small The columns show the progress of the approximation of $f$ (inlay) and the identified safety region (main figure) at different iterations. Each iteration corresponds to a consecutive planning of a new piecewise trajectory (here: 2D ramp). As shown by the results, the current estimation of the safe region (green area) gradually covers the actual safe area (red line), and the approximation error gradually decreases (as shown in the subfigures). An illustrative video showing all iterations can be found in the Appendix.}
\label{fig:snapshots}
\end{figure}

\paragraph{Experiment 2}

In this experiment, we learn a time-series model given as a GP with NX-structure. We have two manipulated variables $u_k^{(1)}$ and $u_k^{(2)}$ at time $k$. The NX-structure is determined to be $\displaystyle{\boldsymbol{x}_k \!=\! (u_k^{(1)},u_k^{(2)},u_{k-1}^{(1)},u_{k-1}^{(2)})}$, an input space with $d\!=\!4$. The ground-truth models of $f$ and $g$ are provided in the Appendix. The piecewise trajectory is again parametrized as 4D-ramps with $m\!=\!5$. We initialize the models using $10$ collected piecewise ramps in a safe area, and start exploring in the input space. For a fair comparison, we benchmark the proposed algorithm against a \textit{random selection with safe constraints} of next piecewise trajectories. Instead of optimizing the ramp parameter $\boldsymbol{\eta}$ as shown in Eq. (\ref{eq:opt-prob_line1}) and (\ref{eq:opt-prob_line2}), we randomly select $\boldsymbol{\eta}$ and pick the first one which fulfills the safety constraint $\xi (\boldsymbol{\eta}) \!>\! 1 \!-\! \alpha$. Fig. \ref{fig:toy1} shows the results of the comparison of the proposed approach (SAL-NX) with random selection.
\begin{figure}[t]	
\colorbox{red}{
	\parbox[c][4.5cm][s]{1\textwidth}{
	\includegraphics[trim = 45mm 0mm 30mm 5mm, clip, width=1\linewidth]{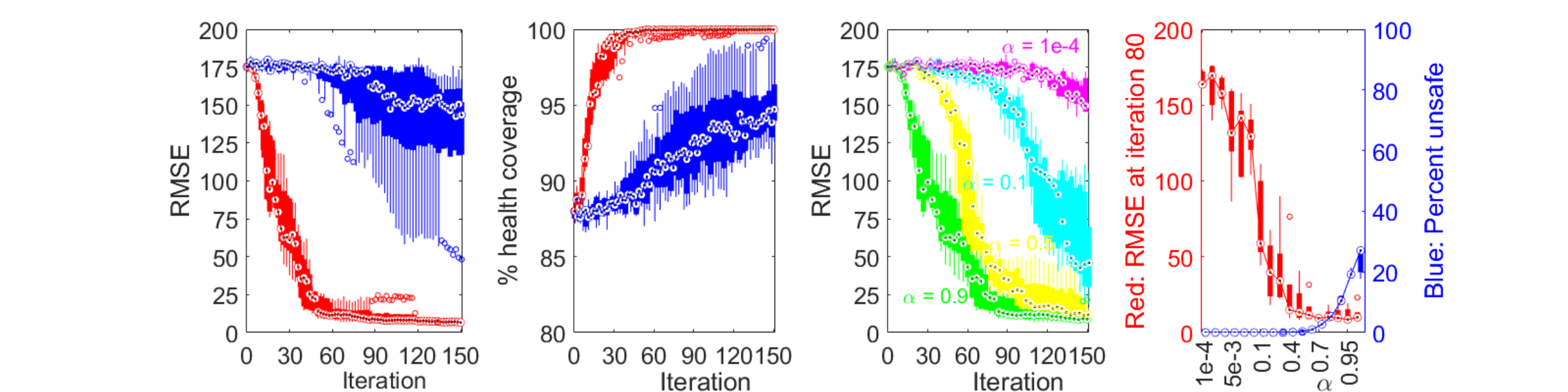}
    }
   }\\[12pt]
\includegraphics[trim = 45mm 0mm 30mm 5mm, clip, width=1\linewidth]{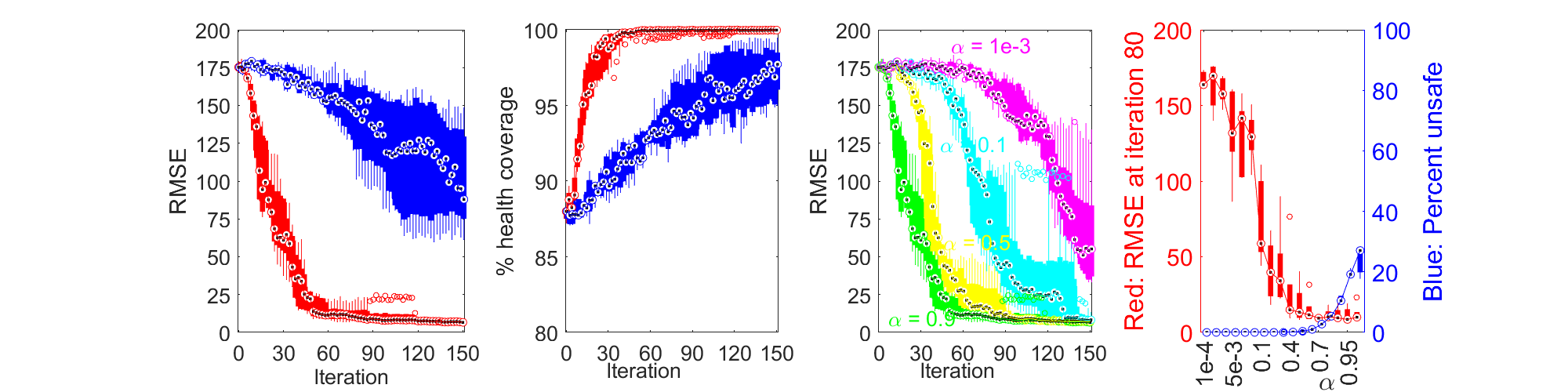}
\vspace{-.2cm}
\caption{
	\small The \textbf{first two} pictures from the left show the comparison of the SAL-NX (red line) with random selection (blue line). SAL-NX yields faster convergence in model approximation (left picture) and coverage of safe regions (right picture), while having less variance and outliers (indicated as small circles). The \textbf{last two} pictures show the impact of the safety threshold $\alpha$. The left picture shows the RMSE of SAL-NX for 4 different values of $\alpha$. The right picture shows the model approximation error as RMSE (red line) and percentage of \textit{unsafe} trajectories (blue line) as a function of $\alpha$. All pictures show boxplots over $5$ repetitions.\newline 
\textcolor{red}{The plot contained inconsistencies which are now corrected. The old plot is left above for comparison. }
}
\label{fig:toy1}\vspace{-.8cm}
\end{figure} 

The results in Fig. \ref{fig:toy1} show that SAL-NX continuously improves the model approximation (shown as RMSE) and provides fast coverage of safe regions. The models for $f$ and $g$ are updated after every iteration by including new sample points. The hyperparameters can be estimated beforehand or updated after every iteration. For the required number of initial trajectories, we refer to the lower bound as given in \cite{Schreiter15ecml}. For computing the safety condition $\xi(\boldsymbol{\tau})$ from Eq. (\ref{eq:safety-int}), we employ Monte-Carlo sampling. Our experiments are performed on a desktop computer. The algorithm is sufficiently fast for real-time applications. 

In this experiment, we also compare our exploration approach with the one proposed in \cite{Deflorian2010}, however, \textit{without} safety requirements in order to cope with the setting from \cite{Deflorian2010}. We adapt their criterion based on the Fisher information for our GP model by employing the GP mean function. Additionally, we also compare the decrease in RMSE of the Fisher information based criterion to the decrease in RMSE of our algorithm. The results can be found in the Appendix \ref{Benchmark_FI} and show a competitive performance of our approach.       

\vspace{-.25cm}
\subsection{Learning a Surrogate Model of the High-Pressure Fluid System}
\label{Sec:PressureExample}
\vspace{-.3cm}
\paragraph{The Use Case}
As a realistic technical use case, we employ the approach to actively learn a surrogate model of a high-pressure fluid injection system, as shown in Figure \ref{fig:HPFS}. Such systems are widely used in industry, e.g. in the automotive domain for injection of fuel into the combustion engine \cite{Tietze2014}.
\begin{wrapfigure}{l}{0.4\textwidth}
\vspace{-10pt}
\def\svgwidth{150pt}    
\includegraphics[width = .925\linewidth]{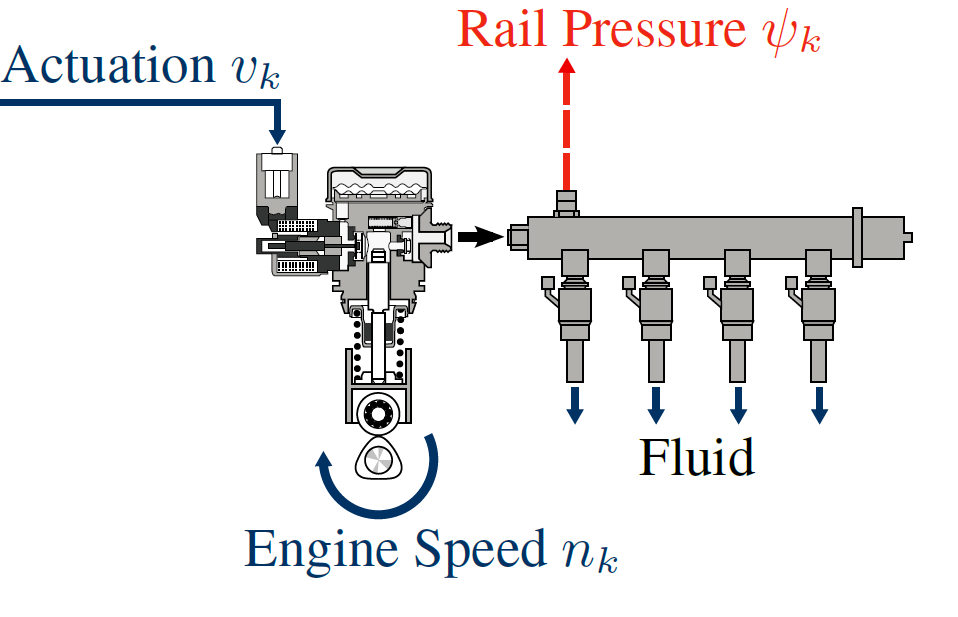}
\caption{\small High-pressure fluid injection system with controllable inputs $v_k\,,n_k$ and measured output $\psi_k$ (picture taken from \cite{Tietze2014}).}
\label{fig:HPFS} \vspace{-0.2cm}   
\end{wrapfigure}
 The physical injection system is controlled by an actuation signal $v_k$ and the speed of an external engine $n_k$, for every time step $k$. The goal is to obtain a surrogate model predicting the rail pressure $\psi_k$, which determines the amount of fluid coming out of the outlets. Due to the nature of the fluid and the mechanical components, the dynamics of the whole system are nonlinear, and thus model learning is an appropriate alternative compared to analytical models. However, generating the data for learning a time series surrogate model by varying the actuation signal and engine speed is not simple, as an inappropriate combination of them would result in hazardously high rail pressures, damaging the physical system.

\paragraph{Learning Time-Series Surrogate Models}

Due to the safety requirements and the fact that the safety boundary is not known beforehand, our safe active learning approach is very appropriate for approximating the dynamics model. The employed NX-structure is chosen to be $\displaystyle \boldsymbol{x}_k\!=\!(n_k,n_{k-1},n_{k-2},n_{k-3},v_k,v_{k-1},v_{k-3})$. We again parameterize with piecewise ramps in this 7D input space. The safety indicator value $z$ is computed as shown in Example \ref{bsp:Safe_PressureSys}, with $\psi_{max}\!=\!18 \, \mbox{MPa}$ being the maximally allowed rail pressure. It should be noted that here $z$ is computed as a function of the target output $\psi$, in constrast to the experiments in Section \ref{Sec:ToyExample}, where $z$ is a function of the input. We initialize the model with $25$ trajectories sampled around a safe point chosen by a domain expert. Subsequently, we start exploring the input space dynamically, considering both the safety constraint and the model information gain, while measuring the actuation and speed signals as input and the rail pressure as output.
\begin{figure}[t]
	\colorbox{red}{
		\parbox[c][4.5cm][s]{1\textwidth}{
			\includegraphics[trim = 45mm 0mm 30mm 5mm, clip, width=1\linewidth]{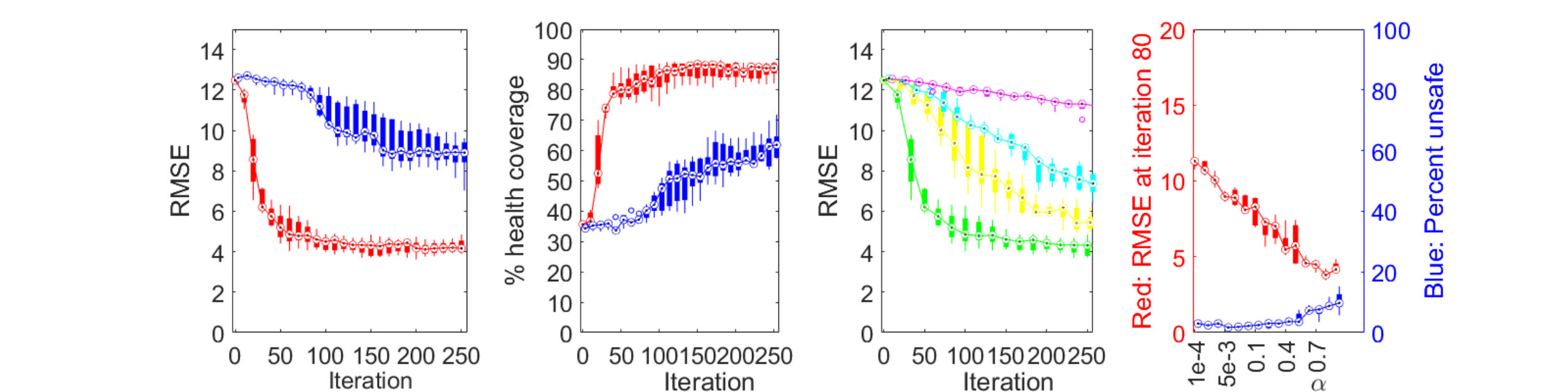}
		}
	}\\[12pt]
\includegraphics[trim = 45mm 0mm 30mm 5mm, clip, width=1\linewidth]{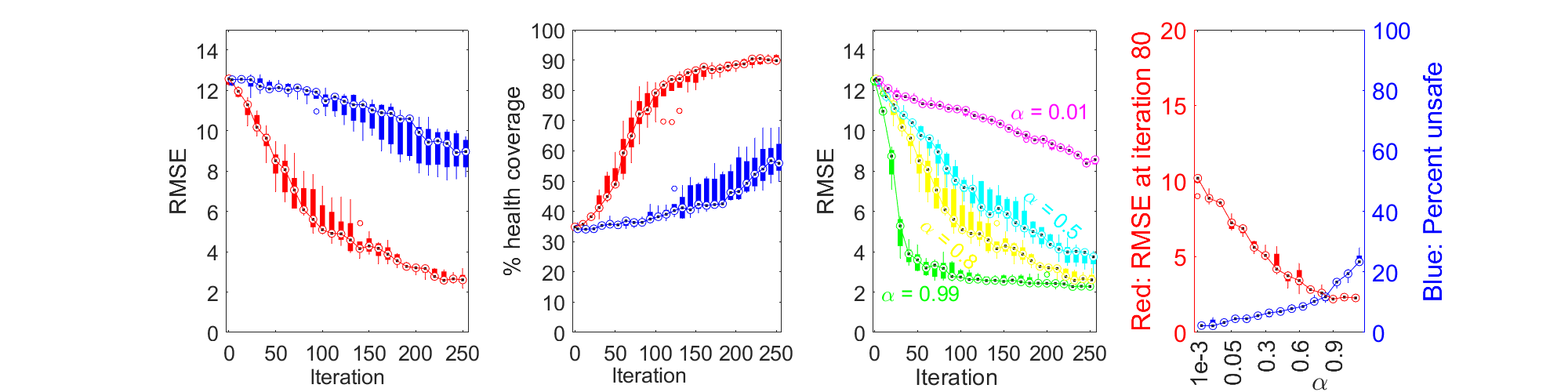}
\caption{\small The \textbf{first two} pictures from the left show the comparison of the SAL-NX (red line) with random selection with safe constraints (blue line), with respect to model approximation and coverage of safe regions. Here, $\alpha\!=\! $ \textcolor{red}{\deleted{$0.5$} $0.8$} and $250$ trajectories are planned. The \textbf{last two} pictures show the impact of the safety threshold $\alpha$ on the approximation error, and failures during exploration. The results are displayed as a boxplot over $5$ repetitions.\newline
\textcolor{red}{The evaluation and plot contained inconsistencies which are now corrected. The old plot is left above for comparison. }
}
\label{fig:RPM}
\end{figure}

Figure \ref{fig:RPM} shows the results after exploring the input space with $250$ consecutive ramp trajectories, each consisting of $m\!=\!5$ discretization points. We update the hyperparameters after every iteration. We compare our SAL-NX approach with the random selection, as described in the previous experiment. The figure also shows the impact of varying the threshold value $\alpha$ on both the model approximation error and the percenteage of selected unsafe trajectories. In practice, the execution of the trajectories on the physical system is interrupted, when the system notices a violation of the maximal pressure $\psi_{max}$. The selected piecewise trajectory is then indicated as unsafe. For the evaluation of the coverage (second picture from the left), the ``ground-truth'' safe region is estimated beforehand with an extensive procedure.
\vspace{-.2cm}
\section{Conclusions}

In this paper we present an approach for active learning of a time-series model, given as a GP model with NX-structure. In this setting, the exploration is performed while taking safety requirements into account. For the successful application of the algorithm, it is crucial that the system can be actively controlled by a set of inputs and a safety signal can be observed during the system's operation. The proposed approach is evaluated on toy examples, as well as on a realistic technical use case. The results show that this approach is appropriate for real-world applications, especially, in the industrial setting, where safety is a key requirement during operation.

{\footnotesize 
}

\newpage
\setcounter{page}{1}

\begin{center}
 \huge Safe Active Learning for Time-Series Modeling \\ with Gaussian Processes
\end{center}

 Christoph Zimmer$^{(1)}$, Mona Meister$^{(1)}$ and Duy Nguyen-Tuong$^{(1)}$   

\noindent\small
$ ^{1}$ Bosch Center for Artificial Intelligence, Robert Bosch GmbH, Renningen, Germany

\begin{center}
 \textbf{   32nd Conference on Neural Information Processing Systems (NeurIPS 2018), Montréal, Canada   }
\end{center}

\section{Appendix}
This Appendix contains the proof of the safety theorem, the proofs of the lemmas and theorems on the variance decay as well as some details on the numerical evaluation section.
\subsection{Proof for the Safety Theorem}
In the following, we prove Theorem \ref{thm:SafeBound}, providing a safety guarantee for algorithm 1.

\begin{proof}[Proof of Theorem \ref{thm:SafeBound}]
According to the constraint in Eq. (\ref{eq:safety-int}), each trajectory $\boldsymbol{\tau}_i$, $i=n_0+1,\ldots,n_0+N$ is unsafe with probability at most $\alpha$ (if the GP quantifies the uncertainty of predictions for $g$ correctly). Thus, a trajectory $\boldsymbol{\tau}_i$ is critical (it is unsafe but considered as safe), if $\xi(\boldsymbol{\tau}_i) \!>\!  1\!-\!\alpha$ and $g(\boldsymbol{x}_{j}^i)\!<\!0$ for at least one $1 \!\leq\! j \!\leq\! m$. Using the union bound, we have
\begin{eqnarray*}
& & P \left(    \cup_{i=n_0+1}^{n_0+N}  \left\{    g(\boldsymbol{x}_{j}^i)\!<\!0 \text{ for at least one } 1\leq j \leq m   |  \xi(\boldsymbol{\tau}_i) \!>\!  1\!-\!\alpha \right\}  \right) \\
   & & \leq \sum_{i=n_0+1}^{n_0+N} P\left( \left\{  g(\boldsymbol{x}_{j}^i)\!<\!0 \text{ for at least one } 1\leq j \leq m   |      \xi(\boldsymbol{\tau}_i) \!>\!  1\!-\!\alpha \  \right\}    \right)  \\
   & & \ \leq \sum_{i=n_0+1}^{n_0+N} \alpha =  N \, \alpha =  N \, \delta / N = \delta\ .
\end{eqnarray*} 
Note that $\xi(\boldsymbol{\tau}_i)$ is a random variable because $\boldsymbol{\rho}$ is a random variable, and therefore $\boldsymbol{\mu}_g$ and $\boldsymbol{\Sigma}_g$ as well, depending on the previously recorded values for $\boldsymbol{\rho}_j$, $1\leq j \leq i-1$. Additionally, $g(\boldsymbol{x}_{j}^i)$ is a random variable as well (even though $g$ is deterministic) because the choice of trajectories (and therefore points $\boldsymbol{x}_{j}^i$) depends on the optimization problem (Eq. \ref{eq:opt-prob_line1}-\ref{eq:opt-prob_line2})), the solution of which depends on $g$, which itself depends on $\boldsymbol{\rho}$. 
\end{proof}
\subsection{Proofs on Reduction of Uncertainty}
Next, we will prove the lemmas and theorems on the variance decay from section \ref{Sec:TheoryDetInfo}. The proofs for Lemma \ref{lem:MuteSigma}, \ref{lem:rephrasesigma}, and \ref{lem:boundingsigma} are motivated by Lemma 5.3 and 5.4 of \cite{srinivas12} but extended to the situation of trajectories.\\

First, we prove Lemma \ref{lem:MuteSigma} to relate the information gain $I$ to the covariance $\Sigma$.\\

\begin{proof}[Proof of Lemma \ref{lem:MuteSigma}]
 The mutual information can be expressed in terms of entropy as 
 \begin{gather*}
  I\left( \{\boldsymbol{\rho}_i\}_{i=1}^n ;  \{\tilde{\boldsymbol{\rho}_i}\}_{i=1}^n \right) = H\left( \{\boldsymbol{\rho}_i\}_{i=1}^n \right) - H\left(  \{\boldsymbol{\rho}_i\}_{i=1}^n |  \{\tilde{\boldsymbol{\rho}_i}\}_{i=1}^n \right)
 \end{gather*}
 As $\{\boldsymbol{\rho}_i\}_{i=1}^n| \{\tilde{\boldsymbol{\rho}_i}\}_{i=1}^n$ is Gaussian with variance $\sigma^2 I_{nm}$, it holds that $H( \{\boldsymbol{\rho}_i\}_{i=1}^n |  \{\tilde{\boldsymbol{\rho}_i}\}_{i=1}^n ) = 1/2 \log |2\pi\ e \sigma^2 I_{nm}| = nm/2 \log( 2\pi e \sigma^2)$. Furthermore, using entropy rules, we obtain
 \begin{gather*}
 H\left( \{\boldsymbol{\rho}_i\}_{i=1}^n \right) = H\left( \{\boldsymbol{\rho}_i\}_{i=1}^{n-1} \right) + H\left( \boldsymbol{\rho}_n | \{\boldsymbol{\rho}_i\}_{i=1}^{n-1} \right) 
 \end{gather*}
and 
\begin{equation*}
 \begin{aligned}
H( \boldsymbol{\rho}_n | \{\boldsymbol{\rho}_i\}_{i=1}^{n-1} ) 
    &= 1/2 \log \left( | 2\pi  \ e (\sigma^2 \boldsymbol{I}_m + \boldsymbol{\Sigma}_{n-1}(\tau_n))| \right)  \\
    &=  1/2 \log \left( | 2\pi  \ e\ \sigma^{2} ( \boldsymbol{I}_m  + \sigma^{-2} \boldsymbol{\Sigma}_{n-1}(\tau_n))| \right)  \\
    &=  1/2     \log \left(  ( 2\pi\ e\ \sigma^{2})^m   \right)  +  1/2  \log \left( |   ( \boldsymbol{I}_m  + \sigma^{-2} \boldsymbol{\Sigma}_{n-1}(\tau_n))| \right) \ .     
 \end{aligned}
\end{equation*}

Alltogether, this leads to 
\begin{equation*}
 \begin{aligned}
    &  I\left( \{\boldsymbol{\rho}_i\}_{i=1}^n ;  \{\tilde{\boldsymbol{\rho}}_i\}_{i=1}^n \right) \\
    &=  H\left( \{\boldsymbol{\rho}_i\}_{i=1}^n \right) - H\left(  \{\boldsymbol{\rho}_i\}_{i=1}^n |  \{\tilde{\boldsymbol{\rho}_i}\}_{i=1}^n \right)\\
    & =  \sum_{j=1}^{n} H( \boldsymbol{\rho}_i | \{\boldsymbol{\rho}_i\}_{i=1}^{j-1} )   \ - \ nm/2 \log( 2\pi e \sigma^2) \\
    & =   \sum_{j=1}^n \left(1/2     \log \left(  ( 2\pi\ e\ \sigma^{2})^m   \right) \right) + \sum_{j=1}^n \left(1/2  \log \left( |   ( \boldsymbol{I}_m  + \sigma^{-2} \boldsymbol{\Sigma}_{j-1}(\tau_j))| \right) \right) -  \ nm/2 \log( 2\pi e \sigma^2)   \\
    & =  1/2 \sum_{i=1}^n \log  | \boldsymbol{I}_m + \sigma^{-2} \boldsymbol{\Sigma}_{i-1}(\boldsymbol{\tau}_i)|       \ .
 \end{aligned}
\end{equation*}

\end{proof}
Lemma \ref{lem:MuteSigma} relates the information $I$ to the covariance $\Sigma$, but not yet to the determinant of the covariance $|\Sigma|$. To link the determinant of the covariance $|\Sigma|$ to the mutual information in Lemma \ref{lem:boundingsigma}, we first state the auxilliary Lemma \ref{lem:rephrasesigma}.
\begin{lemma}
\label{lem:rephrasesigma}
For $i=1,\ldots,n$,
\begin{gather*}
|\boldsymbol{\Sigma}_{i-1}(\tau_{i})|  \leq C_1 \log( 1+ |\sigma^{-2} \boldsymbol{\Sigma}_{i-1}(\tau_i)| )
\end{gather*}
is satisfied, where $C_1 =   \frac{ \sigma_f^{2m}}{log(1+\sigma^{-2m}\sigma_f^{2m} )}$.
\begin{proof}
Let $e_1,\ldots,e_m$ be the eigenvalues of $\boldsymbol{\Sigma}_{i-1}(\tau_i)$. Then it holds that $| \boldsymbol{\Sigma}_{i-1}(\tau_i) | = \prod_{i=1}^m e_i$ and furthermore that $\text{trace}(\boldsymbol{\Sigma}_{i-1}(\tau_i)) = \sum_{i=1}^m e_i$. Keeping in mind that the geometric mean is less than or equal to the arithmetic mean, we have
\begin{gather*}
 \sqrt[m]{| \boldsymbol{\Sigma}_{i-1}(\tau_i) | } \leq \frac{ \text{trace}( \boldsymbol{\Sigma}_{i-1}(\tau_i) ) }{m} \ .
\end{gather*}
As the trace of $\boldsymbol{\Sigma}$ contains only predictive variances (and no correlation terms), we can upper bound them by 
\begin{gather*}
 \frac{ \text{trace}( \boldsymbol{\Sigma}_{i-1}(\tau_i) ) }{m} = \frac{ \sum_{j=1}^m \sigma_{i-1}( x_j^i)  }{m}  \leq  \sigma_{f}^2\ ,
\end{gather*}
where $x_j^i$, $j=1,\ldots,m$, $i=1,\ldots,n$, are the $m$ points of the $i$-th trajectory. This results in the upper bound
\begin{gather*}
 | \boldsymbol{\Sigma}_{i-1}(\tau_i) | \leq \sigma_f^{2\cdot m}\ .
\end{gather*}
Now, it holds that $a := \sigma^{-2m}  | \boldsymbol{\Sigma}_{i-1}(\tau_i) | \leq \sigma^{-2m} \sigma_f^{2m} =: b$. Furthermore, we have
\begin{equation}
  \label{eq:technical}
 \begin{aligned}
 \log( 1 + a ) &= \log\left( \frac{a}{b}(1+b) \ + \ (1-\frac{a}{b}) \cdot 1\right)   \\
  & \geq \frac{a}{b} \log(1+b)\ + \ (1-\frac{a}{b}) \log(1)   \\
  & = \frac{a}{b}\log(1+b)\ ,
 \end{aligned}
\end{equation}
where the second step follows with the concavity of the log function. With this in hand, we can now conclude that 
  \begin{equation}
  \begin{aligned}
  | \boldsymbol{\Sigma}_{i-1}(\tau_i) | 
  &= \sigma^{2m} \ \sigma^{-2m} \ | \boldsymbol{\Sigma}_{i-1}(\tau_i) | \\
 & \leq \sigma^{2m} \frac{\sigma^{-2m} \sigma_f^{2m}}{\log(1+\sigma^{-2m} \sigma_f^{2m} )} \log\left( 1+\sigma^{-2m} |\boldsymbol{\Sigma}_{i-1}(\tau_i)|\right)\\
 & = \sigma^{2m} \frac{\sigma^{-2m} \sigma_f^{2m}}{\log(1+\sigma^{-2m} \sigma_f^{2m} )} \log\left( 1+ |\sigma^{-2} \boldsymbol{\Sigma}_{i-1}(\tau_i)|\right)   \\
 & = C_1 \log\left( 1+ |\sigma^{-2} \boldsymbol{\Sigma}_{i-1}(\tau_i)|\right)  \  ,
 \end{aligned}
 \end{equation}
where $C_1 :=   \frac{  \sigma_f^{2m}}{\log(1+\sigma^{-2m} \sigma_f^{2m} )}$ .
\end{proof}
\end{lemma}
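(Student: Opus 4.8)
The plan is to prove the inequality separately for each $i$, by first obtaining a crude a priori bound on the determinant $|\boldsymbol{\Sigma}_{i-1}(\boldsymbol{\tau}_i)|$ and then using a concavity argument to upgrade this linear-in-$|\boldsymbol{\Sigma}|$ bound to the desired logarithmic one; this follows the pattern of the scalar lemmas of Srinivas et al., extended from variances to $m\times m$ covariance matrices. Throughout I will use that $\boldsymbol{\Sigma}_{i-1}(\boldsymbol{\tau}_i)$ is symmetric positive semidefinite, so $|\cdot|$ is the product of its eigenvalues, and that $|\sigma^{-2}\boldsymbol{\Sigma}_{i-1}(\boldsymbol{\tau}_i)| = \sigma^{-2m}|\boldsymbol{\Sigma}_{i-1}(\boldsymbol{\tau}_i)|$ because the matrix has size $m$.

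First I would show $|\boldsymbol{\Sigma}_{i-1}(\boldsymbol{\tau}_i)| \le \sigma_f^{2m}$. Letting $e_1,\dots,e_m\ge 0$ denote the eigenvalues, we have $|\boldsymbol{\Sigma}_{i-1}(\boldsymbol{\tau}_i)| = \prod_j e_j$ and $\mathrm{trace}(\boldsymbol{\Sigma}_{i-1}(\boldsymbol{\tau}_i)) = \sum_j e_j$, so the arithmetic--geometric mean inequality gives $|\boldsymbol{\Sigma}_{i-1}(\boldsymbol{\tau}_i)|^{1/m} \le \mathrm{trace}(\boldsymbol{\Sigma}_{i-1}(\boldsymbol{\tau}_i))/m$. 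The trace collects only the $m$ posterior predictive variances $\sigma_{i-1}(\boldsymbol{x}_j^i)$ at the points of the trajectory (no off-diagonal correlation terms), and for a GP each posterior variance is at most the prior variance $k(\boldsymbol{x},\boldsymbol{x}) = \sigma_f^2$, since $\boldsymbol{\Sigma}(\boldsymbol{\tau}^*) = \boldsymbol{k}^{**} - \boldsymbol{k}^T(\boldsymbol{K}_n+\sigma^2\boldsymbol{I})^{-1}\boldsymbol{k} \preceq \boldsymbol{k}^{**}$. Hence $\mathrm{trace}(\boldsymbol{\Sigma}_{i-1}(\boldsymbol{\tau}_i))/m \le \sigma_f^2$, and raising to the $m$-th power yields the claimed a priori bound.

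Next I would run the concavity trick. Set $a := \sigma^{-2m}|\boldsymbol{\Sigma}_{i-1}(\boldsymbol{\tau}_i)|$ and $b := \sigma^{-2m}\sigma_f^{2m}$, so $0 \le a \le b$ by the previous step. Writing $1+a$ as the convex combination $\tfrac{a}{b}(1+b) + (1-\tfrac{a}{b})\cdot 1$ and invoking concavity of $\log$, I obtain $\log(1+a) \ge \tfrac{a}{b}\log(1+b)$, i.e. $a \le \tfrac{b}{\log(1+b)}\log(1+a)$. Multiplying by $\sigma^{2m}$ and substituting back the definitions of $a$ and $b$ turns this into $|\boldsymbol{\Sigma}_{i-1}(\boldsymbol{\tau}_i)| \le \tfrac{\sigma_f^{2m}}{\log(1+\sigma^{-2m}\sigma_f^{2m})}\,\log\big(1+|\sigma^{-2}\boldsymbol{\Sigma}_{i-1}(\boldsymbol{\tau}_i)|\big)$, which is exactly the statement with $C_1 = \sigma_f^{2m}/\log(1+\sigma^{-2m}\sigma_f^{2m})$.

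I expect the main obstacle to be the first step: a posterior covariance determinant has no direct elementary bound, so one is forced to detour through the trace via AM--GM and then use the GP property that posterior variances are dominated by the prior variance. The concavity step itself is routine, but it only works if one already has an a priori upper bound on its argument — and recognizing that is precisely what dictates establishing the $\sigma_f^{2m}$ bound first — so the two steps are tightly coupled. A minor point to get right is the dimension-$m$ bookkeeping $|\sigma^{-2}\boldsymbol{\Sigma}| = \sigma^{-2m}|\boldsymbol{\Sigma}|$, which is the source of the exponent $2m$ appearing in $C_1$.
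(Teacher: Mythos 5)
Your proposal is correct and follows essentially the same route as the paper's proof: the a priori bound $|\boldsymbol{\Sigma}_{i-1}(\boldsymbol{\tau}_i)|\leq\sigma_f^{2m}$ via AM--GM on the eigenvalues and the domination of posterior variances by the prior variance, followed by the convex-combination/concavity-of-$\log$ argument with the same $a$ and $b$. No gaps; the dimension bookkeeping $|\sigma^{-2}\boldsymbol{\Sigma}|=\sigma^{-2m}|\boldsymbol{\Sigma}|$ is also handled exactly as in the paper.
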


\begin{lemma}
\label{lem:boundingsigma}
 The sum of the determinants of variances is upper bounded by the mutual information, i.e. 
\begin{gather*}
\sum_{i=1}^{n} | \boldsymbol{\Sigma}_{i-1}(\tau_i) |  \leq  C\  I\left( (\boldsymbol{\rho})_{i=1}^n ;  \{\tilde{\boldsymbol{\rho}}\}_{i=1}^n \right) .
\end{gather*}
\begin{proof}
 For $C := 2 C_1$, it holds
 \begin{equation*}
  \begin{aligned}
  \sum_{i=1}^n  |\boldsymbol{\Sigma}_{i-1}(\tau_i)| 
   & \leq \sum_{i=1}^n C_1 \log\left( 1+ | \sigma^{-2} \boldsymbol{\Sigma}_{i-1}(\tau_i)|\right) \\
   &= \sum_{i=1}^n C_1 \log\left( |\boldsymbol{I}| +  | \sigma^{-2} \boldsymbol{\Sigma}_{i-1}(\tau_i)|\right) \\
   & \leq \sum_{i=1}^n C_1 \log\left( |\boldsymbol{I} +  \sigma^{-2} \boldsymbol{\Sigma}_{i-1}(\tau_i)|\right) \\
   & =  C \  I\left( (\boldsymbol{\rho})_{i=1}^n ;  \{\tilde{\boldsymbol{\rho}}\}_{i=1}^n \right)   \ .
  \end{aligned}
 \end{equation*}
 Here, the first step holds due to Lemma \ref{lem:rephrasesigma}, the third as $|\boldsymbol{I}|+|\boldsymbol{A}| \leq |\boldsymbol{I}+\boldsymbol{A}|$ for a positive semidefinite matrix $\boldsymbol{A}$ as $|\boldsymbol{I}|+|\boldsymbol{A}|=\prod_{i=1}^m 1 + \prod_{i=1}^m a_i \leq \prod_{i=1}^m (1+a_i) = |\boldsymbol{I}+\boldsymbol{A}|$ with $a_i$ being eigenvalues of $\boldsymbol{A}$ and $a_i\geq 0$ as $\boldsymbol{A}$ positive semidefinite.  The last step holds according to Lemma \ref{lem:MuteSigma}. 
 Note that $(\boldsymbol{A}+\boldsymbol{I})\boldsymbol{v_i} = \boldsymbol{A} \boldsymbol{v_i} + \boldsymbol{I} \boldsymbol{v_i} = a_i \cdot \boldsymbol{v_i} + \boldsymbol{v_i} = (a_i+1)\boldsymbol{v_i}$ for $\boldsymbol{v_i}$ being an eigenvector for eigenvalue $a_i$ shows that $a_i+1$ is indeed an eigenvalue of $\boldsymbol{A}+\boldsymbol{I}$. 
\end{proof}
\end{lemma}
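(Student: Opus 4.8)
The plan is to establish the bound through a short chain that passes from $|\boldsymbol{\Sigma}_{i-1}(\tau_i)|$ first to a logarithm of a scalar via Lemma \ref{lem:rephrasesigma}, then rewrites that scalar as a single determinant so that Lemma \ref{lem:MuteSigma} applies verbatim after summing over $i$. Concretely, I would set $C := 2 C_1$ with $C_1$ as in Lemma \ref{lem:rephrasesigma}, and begin from the term-by-term estimate $|\boldsymbol{\Sigma}_{i-1}(\tau_i)| \leq C_1 \log\!\left(1 + |\sigma^{-2}\boldsymbol{\Sigma}_{i-1}(\tau_i)|\right)$, valid for each $i = 1,\dots,n$. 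The remaining work is to convert the right-hand side into the exact summand $\tfrac{1}{2}\log|\boldsymbol{I}_m + \sigma^{-2}\boldsymbol{\Sigma}_{i-1}(\tau_i)|$ appearing in Lemma \ref{lem:MuteSigma}, and then invoke that identity.

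The crux of the argument, and the step I expect to be the main obstacle, is converting the scalar $1 + |\sigma^{-2}\boldsymbol{\Sigma}_{i-1}(\tau_i)|$ inside the logarithm into the determinant $|\boldsymbol{I}_m + \sigma^{-2}\boldsymbol{\Sigma}_{i-1}(\tau_i)|$. Writing $1 = |\boldsymbol{I}_m|$, this reduces to the determinant inequality $|\boldsymbol{I}_m| + |\boldsymbol{A}| \leq |\boldsymbol{I}_m + \boldsymbol{A}|$ for any positive semidefinite matrix $\boldsymbol{A}$, applied with $\boldsymbol{A} = \sigma^{-2}\boldsymbol{\Sigma}_{i-1}(\tau_i)$. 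I would prove this via the eigenvalues: if $a_1,\dots,a_m \geq 0$ are the eigenvalues of $\boldsymbol{A}$, then any eigenvector $\boldsymbol{v}$ with $\boldsymbol{A}\boldsymbol{v} = a_j \boldsymbol{v}$ satisfies $(\boldsymbol{A} + \boldsymbol{I}_m)\boldsymbol{v} = (a_j + 1)\boldsymbol{v}$, so $\boldsymbol{I}_m + \boldsymbol{A}$ has eigenvalues $1 + a_j$, whence $|\boldsymbol{I}_m| + |\boldsymbol{A}| = 1 + \prod_{j=1}^m a_j \leq \prod_{j=1}^m (1 + a_j) = |\boldsymbol{I}_m + \boldsymbol{A}|$, the middle inequality being immediate upon expanding the product since all $a_j \geq 0$. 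The positive semidefiniteness of $\boldsymbol{A}$ is guaranteed because $\boldsymbol{\Sigma}_{i-1}(\tau_i)$ is a GP predictive covariance matrix and $\sigma^{-2} > 0$.

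Finally, combining these steps and summing over $i$ yields $\sum_{i=1}^n |\boldsymbol{\Sigma}_{i-1}(\tau_i)| \leq C_1 \sum_{i=1}^n \log|\boldsymbol{I}_m + \sigma^{-2}\boldsymbol{\Sigma}_{i-1}(\tau_i)|$, and by the identity of Lemma \ref{lem:MuteSigma} the sum $\tfrac{1}{2}\sum_{i=1}^n \log|\boldsymbol{I}_m + \sigma^{-2}\boldsymbol{\Sigma}_{i-1}(\tau_i)|$ equals $I\!\left(\{\boldsymbol{\rho}_i\}_{i=1}^n ; \{\tilde{\boldsymbol{\rho}}_i\}_{i=1}^n\right)$, so the right-hand side equals $2 C_1 \cdot I = C \cdot I$. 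The only bookkeeping subtlety beyond the determinant inequality is matching constants: the factor $1/2$ supplied by the mutual-information formula is exactly what is absorbed by the doubling $C = 2 C_1$; everything else is a direct application of the two prior lemmas.
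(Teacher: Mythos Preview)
Your proposal is correct and follows essentially the same route as the paper: apply Lemma~\ref{lem:rephrasesigma} term by term, rewrite $1=|\boldsymbol{I}_m|$, use the eigenvalue-based inequality $|\boldsymbol{I}_m|+|\boldsymbol{A}|\le|\boldsymbol{I}_m+\boldsymbol{A}|$ for positive semidefinite $\boldsymbol{A}$, and then invoke Lemma~\ref{lem:MuteSigma} with $C=2C_1$ to absorb the factor $1/2$. The argument, including the eigenvector justification that $1+a_j$ are the eigenvalues of $\boldsymbol{I}_m+\boldsymbol{A}$, matches the paper's proof.
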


As we have now established a link between the determinant of the covariance -- our criterion -- and the mutual information $I$, we can upper bound the average of the determinant of the covariances and prove Lemma \ref{lem:AverageVariance}. 

\begin{proof}[Proof of Lemma \ref{lem:AverageVariance}]
According to Lemma \ref{lem:boundingsigma} it holds
\begin{gather*}
\sum_{i=1}^{n} | \boldsymbol{\Sigma}_{i-1}(\tau_i) |  \leq  C\ I\left( \{\boldsymbol{\rho}\}_{i=1}^n ;  \{\tilde{\boldsymbol{\rho}}\}_{i=1}^n \right) .
\end{gather*}
Dividing by $n$ and upper bounding $I$ by $\gamma_{n}$ yields the desired results.
\end{proof} 
This statement holds for trajectories from a run of our algorithm without safety. To ensure a good model, we are more interested in the behavior for arbirtrary trajectories, not only those that were chosen by our algorithm. Using a result from Srinivas \cite{srinivas12}, Theorem \ref{thm:VarianceDecay} extends the statement to an asymptotic behavior of the covariances of \textcolor{red}{other \deleted{arbitrary}} trajectories.

\begin{proof}[Proof of Theorem \ref{thm:VarianceDecay}]
As our exploration scheme in Eq. (\ref{eq:opt-prob_line1}) always chooses the trajectory with the highest determinant (D criterion), the average determinant of an actively learned scheme is always greater than or equal to the average determinant of 
\textcolor{red}{any other trajectory $(\tilde{\boldsymbol{\tau}}_i)$ that shares the same start point. \deleted{
		an arbitrary scheme.
	}
}
Therefore:
 \begin{gather*}
   \frac{1}{n}\sum_{i=1}^n |\boldsymbol{\Sigma}_{i-1}(\tilde{\tau}_i)| \leq \frac{1}{n}\sum_{i=1}^n |\boldsymbol{\Sigma}_{i-1}(\tau_i)|
 \end{gather*}
which is $O\left( \frac{ \log(n)^{d+1}}{n} \right)$ according to Lemma \ref{lem:AverageVariance} and Theorem 5 of \cite{srinivas12}.
\end{proof}
By Theorem \ref{thm:VarianceDecay}, the average of the determinants of the covariance matrices tends to zero. However, it is phrased for a domain $X$ which is invariant over the iteration of our algorithm. This means it does not yet take safety constraints into account. To this end, Theorem \ref{thm:DecayForSafe} extends Theorem \ref{thm:VarianceDecay} such that it still holds for safe exploration.

\begin{proof}[Proof of Theorem \ref{thm:DecayForSafe}]
As it holds that $S_i\subseteq X$ for all $i$, it follows that $$\max_{\tilde{\boldsymbol{\tau}}_i\in S_i^m} |\boldsymbol{\Sigma}_{i-1}(\tilde{\boldsymbol{\tau}}_i)| \leq \max_{\tilde{\boldsymbol{\tau}}_i\in X^m} |\boldsymbol{\Sigma}_{i-1}(\tilde{\boldsymbol{\tau}}_i)| $$ for $1\leq i \leq n$. The remainder of the proof is analogous to the proof of Theorem \ref{thm:VarianceDecay}.
\end{proof}

%
\subsection{Numerical Evaluation Details}
For Experiment 2 in Section \ref{Sec:ToyExample} the ground truth regression function $f$ depends on the current state $u_k = \left( u_k^{(1)}, u_k^{(2)} \right)$ and the previous state $u_{k-1} = \left( u_{k-1}^{(1)}, u_{k-1}^{(2)} \right)$. Using $\boldsymbol{x}_k \!=\! (u_k,u_{k-1})$, the ground truth function is given by 
\begin{gather}
 \label{eq:model2}
 f(x_k) = \tilde{f}(u_k)
     +   \frac{\tilde{f}(u_k) - \tilde{f}\left( \left(u_{k-1}^{(1)},u_{k}^{(2)}  \right) \right) }{u_k^{(1)} - u_{k-1}^{(1)}}  
     +   \frac{\tilde{f}(u_k) - \tilde{f}\left( \left(u_{k}^{(1)},u_{k-1}^{(2)}  \right) \right) }{u_k^{(2)} - u_{k-1}^{(2)}} \   
\end{gather}
with a base function $\tilde{f}:[-5,45]^2\subset \mathbb{R}^2 \rightarrow \mathbb{R}$, where 
\begin{gather*}
\tilde{f}(u) = (u^{(1)}-2)^2 - (u^{(1)}-2) (u^{(2)}-2) + (u^{(2)}-2)^2  \ .
\end{gather*}
The observations are drawn from the model $y = f(\boldsymbol{x}) + \epsilon$ with noise $\epsilon\sim\mathcal{N}(0,1)$. \\[12pt]
The ground truth safety is constructed similarly with a base function $\tilde{g}:[-5,45]^2\subset \mathbb{R}^2 \rightarrow \mathbb{R}$ given by 
\begin{gather*}
\tilde{g}(u) = (u^{(1)}-5)^2 - (u^{(1)}-5) (u^{(2)}-5) + (u^{(2)}-5)^2 
\end{gather*}
and a history dependent ground truth $g$ 
\begin{gather*}
 g(x_k) = \tilde{g}(u_k)
     -  \left | \frac{\tilde{g}(u_k) - \tilde{g}\left( (u_{k-1}^{(1)},u_{k}^{(2)} )\right) }{u_k^{(1)} - u_{k-1}^{(1)}}  \right | 
     -  \left |  \frac{\tilde{g}(u_k) - \tilde{g}\left( (u_{k}^{(1)},u_{k-1}^{(2)} )\right) }{u_k^{(2)} - u_{k-1}^{(2)}}  \right | \  .
\end{gather*}
The safety observations are drawn from the model $z = -0.005 * g(\boldsymbol{x}) + 1 + \epsilon_g$ with normally distributed noise $\epsilon_g$ with zero mean and standard deviation $0.01$. \\[12pt]
\textcolor{red}{Further remarks on the numerical details: 
  \begin{itemize}
  	\item To investigate the effects of hyperparameter training, we conducted one experiment (Experiment 2) with fixed hyperparameters set as 
  	$\theta_f = (\sigma_f^2, \Lambda_f^2, \sigma_{f,n}^2) = (2.25,2.25,2.25,2.25,1,0.25)$ and $\theta_g = (\sigma_g^2, \Lambda_g^2, \sigma_{g,n}) = (2.25, 2.25, 2.25, 2.25, 4, 0.00025)$ and one experiment (Rail-Pressure) with hyperparameter training where we used the following upper and lower constrained for the hyperparameters 
  	\begin{eqnarray*}
  		\theta_f^{low} &=& (0.25, 0.25, 0.25, 0.25 0.0025, 0.25, 0.0025, 0.25, 0.0025) \\
  		\theta_f^{up} &=& (25, 25,25,25, 4, 6.25, 6.25, 1, 0.25)\\
  		\theta_g^{log} &=& (0.25, 0.25, 0.25, 0.25, 0.0025, 0.25, 0.0025, 0.01, 0.0025)\\
  		\theta_g^{up} &=& (25,25,25,25, 4, 6.25, 6.25, 25, 0.0625)
  	\end{eqnarray*} 	 
  	\item Data is normalized before GP hyperparameter training.
  	\item We update the GP for the RPM modell in every 10$^{th}$ iteration to reduce computational time for training.
  	\item The coverage of the safe region, Figure \ref{fig:toy1} and \ref{fig:RPM} 2nd plot samples 1000 for (Experiment 2) and 2000 (for the High-Pressure Fluid System) trajectories randomly in the input space with a discretization of 5 points. For each of the discretized points (including its history structure), we check whether it is safe and whether the model thinks that it is safe (mean prediction below safety threshold). The fraction of correctly identified points to all points is then the health coverage.
  	\item For the High-Pressure Fluid System, we perform all except the initial training without the initial (non-actively gathered) data to reduce the impact of the initial data gathered in a rather small area of the space.  
  	\item We define prior GP mean and standard devation as zero and one for the regression GP as well as -2 and 1 for the safety GP as we want to consider areas without knowlegde as unsafe.
  \end{itemize}
} 

%
%
\subsection{Benchmarking without safety consideration}
\label{Benchmark_FI}

We have benchmarked our newly proposed SAL-NX algorithm against a random selection approach and shown the superiority of our algorithm with respect to decreasing RMSE and speed of learning the safe area (Figures \ref{fig:toy1} and \ref{fig:RPM}). Additionally, one might also think of a comparison to a more sophisticated approach than random selection. However, we are not aware of any algorithm that tackels all three challenges: active learning, safe learning and learning of dynamics models. There is an approach that is designed for dynamics model learning based on work of \cite{Deflorian2010} and \cite{Deflorian2011}. We reimplemented their approach and code as follows in order to allow for the comparison on model learning without taking safety considerations into account as their algorithm does not consider safety aspects.\\[12pt]
The approach of \cite{Deflorian2010} and \cite{Deflorian2011} is based on building a Fisher information matrix. Intuitively speaking, the Fisher information matrix measures the sensitivity of the model response with respect to parameters -- in our case the endpoints of the trajectories. By choosing a point with high sensitivity, one can place measurements to locations where they provide much information on the parameter. The Fisher Information matrix is given by
\begin{gather*}
 I_{\text{FIM}}(\boldsymbol{\eta}) = \left( I_{\text{FIM}}(\boldsymbol{\eta})\right)_{j=1,k=1}^{m,m}
\end{gather*}
with entries
\begin{gather*}
 \left( I_{\text{FIM}}(\boldsymbol{\eta})\right)_{j,k} = \frac{\partial \boldsymbol{\mu}(\boldsymbol{\tau}(\boldsymbol{\eta}) ^T) }{\partial \eta^{(j)} } \ \frac{\partial \boldsymbol{\mu}(\boldsymbol{\tau}(\boldsymbol{\eta}))  }{\partial \eta^{(k)}} + \sum_{i=1}^{n-1}  \frac{\partial \boldsymbol{\mu}(\boldsymbol{\tau}(\boldsymbol{\eta}_i) )^T }{\partial \eta^{(j)} } \ \frac{\partial \boldsymbol{\mu}(\boldsymbol{\tau}(\boldsymbol{\eta}_i))  }{\partial \eta^{(k)}}
\end{gather*}
where $\boldsymbol{\mu}(\boldsymbol{\tau}(\boldsymbol{\eta}_i) ) $ is the mean vector of the $i$-th trajectory with endpoint $\boldsymbol{\eta}_i$. Similarly as in equation (\ref{eq:opt-prob_line1}), we choose the next trajectory by optimizing the information gain - now measured by the Fisher Information matrix
\begin{eqnarray}
 \label{eq:opt-prob_fisher}
 \boldsymbol{\eta}^* &= \argmax_{\boldsymbol{\eta} \in \Pi}\ \ \mathcal{I} \left(I_{\text{FIM}}(\boldsymbol{\eta}) \right) .     
\end{eqnarray}
Figure \ref{fig:fisher-results} (left panel) shows a comparison similar to Figure \ref{fig:toy1} (left panel) between our proposed SAL-NX algorithm without safety consideration and the approach of \cite{Deflorian2010} and \cite{Deflorian2011} based on the Fisher information matrix. This first comparison does not look promising for the Fisher information matrix. The main difference of SAL-NX without safety consideration and the Fisher Information matrix  is that SAL-NX reduces the predictive uncertainty while the Fisher information focuses on the slope of the GP mean prediction. \\[12pt]
Figure \ref{fig:fisher-results} (right panel) shows how the safe area is learned. Note, that even though there is no safety restriction (as given by Eq. (\ref{eq:opt-prob_line2})) for the algorithm in exploring the input space, safety measurements $z$ are still recorded and, therefore, a safety model can be learned. Our SAL-NX approach seems to show competitive performance in learning the safety model as well. \\[12pt]
None of both approaches without safety considerations leads to safe learning as in both approaches the percentage of safety violations is high: more than $40\%$ in average over $5$ repetitions. 
\begin{figure}[H]
\includegraphics[width=1\linewidth]{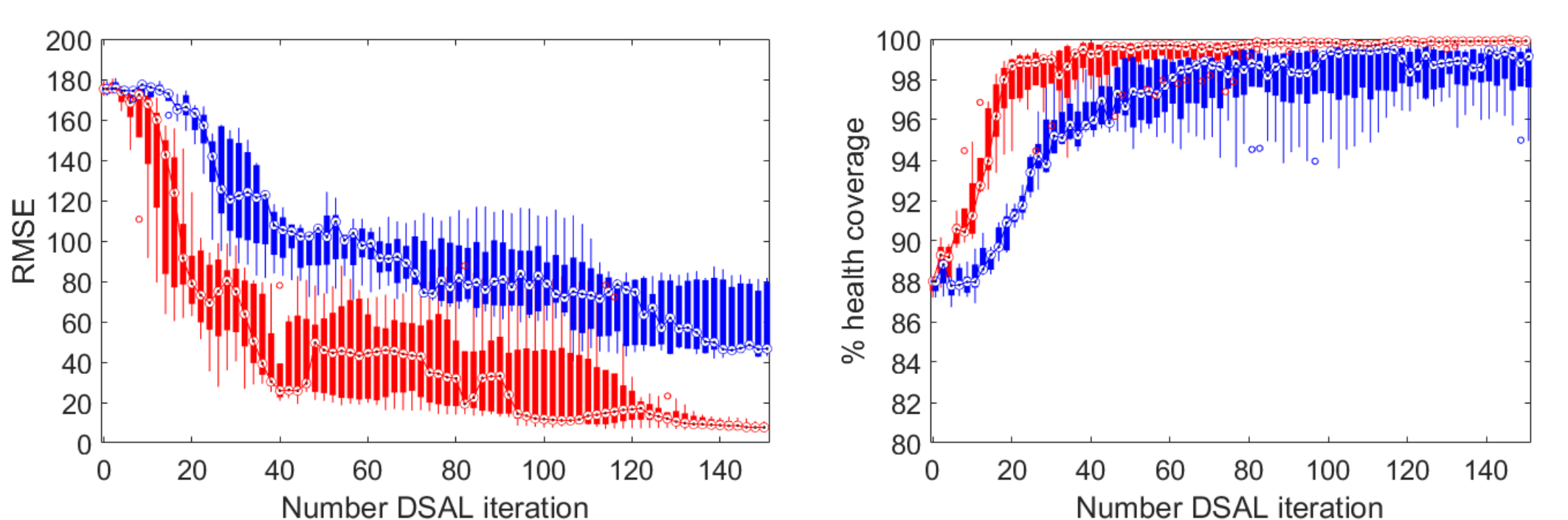}
\caption{\small Right panel: As figure \ref{fig:toy1} but without safety consideration. Blue color representing the approach based on the Fisher information matrix, red color still representing our SAL-NX approach (now without safety consideration).  }
\label{fig:fisher-results} \vspace{-0.2cm}   
\end{figure}
\section{Appendix - Files}

\textbf{Our SAL-NX algorithm safely learning a model -- video.} File: SAL-NX-in-action.avi. \\
Upper panel: RMSE; lower panel: red - boundary between safe and unsafe area, green - learned as safe area, black - already executed trajectory, purple - new trajectory piece currently planned. 

\section{Code}

The code \textcolor{red}{\deleted{will be} is} released on GitHub \textcolor{red}{: https://github.com/ChristophZimmer/Dynamic-Safe-Active-Learning}

\end{document}